\def\eqref#1{equation~\ref{#1}}
\def\Eqref#1{Equation~\ref{#1}}
\def\Algref#1{Algorithm~\ref{#1}}
\def\1{\bm{1}}
\DeclareMathAlphabet{\mathsfit}{\encodingdefault}{\sfdefault}{m}{sl}
\SetMathAlphabet{\mathsfit}{bold}{\encodingdefault}{\sfdefault}{bx}{n}
\newcommand{\comment}[2]{\Statex \hspace{#2} $\triangleright$ #1}
\newtheorem{definition}{Definition}
\newtheorem{proposition}{Proposition}
\def\diag{\operatorname{diag}}
\begin{document}

\runningtitle{DyKAF: Dynamical Kronecker Approximation of the Fisher Matrix for Gradient Preconditioning}

\runningauthor{Nikolay Yudin, Ekaterina Grishina, Andrey Veprikov, Alexandr Beznosikov, Maxim Rakhuba}
\algrenewcommand\algorithmicindent{1.0em}
\twocolumn[

\aistatstitle{DyKAF: Dynamical Kronecker Approximation of the Fisher Information Matrix for Gradient Preconditioning}

\aistatsauthor{Nikolay Yudin\footnotemark \And Ekaterina Grishina$^\ast$ \And Andrey Veprikov$^\ast$}

\aistatsaddress{HSE University \\ \texttt{neyudin@edu.hse.ru} \And  HSE University \And Basic Research of Artificial \\ Intelligence Laboratory (BRAIn Lab)}
\aistatsauthor{Alexandr Beznosikov \And Maxim Rakhuba}
\aistatsaddress{Basic Research of Artificial \\ Intelligence Laboratory (BRAIn Lab) \\ Innopolis University \And HSE University}]

\footnotetext{Equal contribution}

\begin{abstract}
Recently, optimizers that explicitly treat weights as matrices, rather than flattened vectors, have demonstrated their effectiveness. 
This perspective naturally leads to structured approximations of the Fisher matrix as preconditioners, where the matrix view induces a Kronecker‑factorized form that enables memory‑efficient representation. 
However, constructing such approximations both efficiently and accurately remains an open challenge, since obtaining the optimal factorization is resource‑intensive and practical methods therefore rely on heuristic design choices.
In this work, we introduce a novel approach that leverages projector‑splitting integrators to construct effective preconditioners. 
Our optimizer, $\textbf{DyKAF}$ ($\textbf{Dy}$namical $\textbf{K}$ronecker $\textbf{A}$pproximation of the $\textbf{F}$isher Matrix), consistently improves the Fisher matrix approximation quality. 
Experiments on large language model pre‑training and fine‑tuning demonstrate that $\textbf{DyKAF}$ outperforms existing optimizers across a range of evaluation metrics.
\end{abstract}

\section{Introduction}
\label{sec:intro}

Efficient optimization is crucial for training deep models, both during pretraining on large datasets and fine-tuning for specific tasks \citep{goodfellow2016deep, team2025kimi, parthasarathy2024ultimate}.
Various optimization methods like classical SGD \citep{rumelhart1986learning} or resent Muon \citep{jordan2024muon} rely primarily on gradient information and do not approximate curvature of the loss function. The widely adopted Adam optimizer \citep{kingma2014adam} takes it into account by applying a diagonal preconditioning that scales each parameter independently. More advanced second-order methods estimate the full curvature of the loss surface through approximations of the Hessian matrix and thereby enable faster convergence \citep{semenov2025benchmarking}.

However, exact computation and storage of the Hessian matrix remains infeasible for models with billions of parameters \citep{hernandez2025apertus}, which motivates ongoing research into scalable and accurate approximations. One effective approach is natural gradient descent \citep{amari1998natural}, which uses the Fisher information matrix (see \Eqref{eq:fisher} in Section \ref{sec:fisher}) as an approximation of the Hessian. 

The empirical Fisher information matrix, often computed independently for each neural network layer, provides a practical \citep{martens2020new, frantar2021m, wu2024improved} and theoretically grounded surrogate for the full Hessian under certain conditions \citep{kunstner2019limitations, semenov2025gradient}.

Nevertheless, even when restricted to layerwise blocks, the Fisher information matrix remains prohibitively large, making both its storage and inversion computationally expensive. To address this, several approximations have been proposed, most notably the Kronecker-Factored Approximate Curvature (K-FAC) method \citep{martens2015optimizing}, as well as more recent related optimizers such as Shampoo \citep{gupta2018shampoo} and SOAP \citep{vyas2024soap}. These approaches treat model parameters as matrices and leverage Kronecker factorization to reduce computational complexity while retaining meaningful curvature information. For a comprehensive overview of the Kronecker-Factored methods, we refer the reader to Section~\ref{sec:related_work}.

A fundamental limitation of Kronecker approximation methods is the inability to compute optimal factors without the full Fisher matrix. Consequently, these methods often inaccurately approximate the true Hessian, as shown in Figure~\ref{fig:hess_diff}. In this small-scale experiment, we evaluate Hessian approximation accuracy in a classification setup, where the Hessian is analytically computable (see details in Appendix~\ref{app:hessian_calc}). This setup allows a direct comparison between the preconditions of state-of-the-art methods like SOAP \citep{vyas2024soap} and our proposed optimizer, DyKAF (Algorithm \ref{alg:dykaf}, Section \ref{sec:optimizer}).

\begin{figure}[h!]
    \centering
    \includegraphics[width=\linewidth]{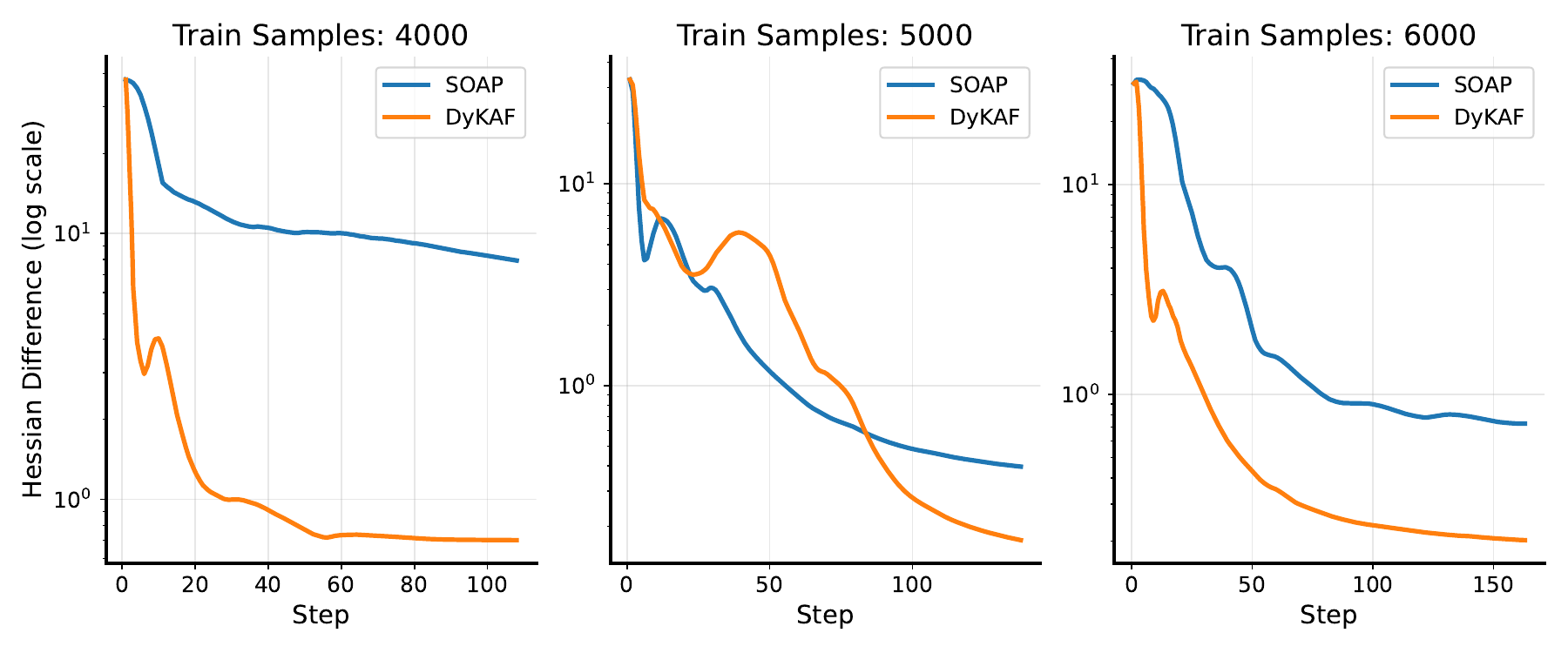}
    \caption{
    Comparison of Hessian approximation in the controlled environment. The Frobenius norm of the difference between the Hessian and its approximations by SOAP and DyKAF (Algorithm \ref{alg:dykaf}) is shown for varying training sample sizes. Our algorithm achieves consistently better approximation accuracy.
    }

    \label{fig:hess_diff}
\end{figure}

An imprecise approximation of the Fisher information matrix in existing methods can limit the effectiveness of gradient preconditioning. This work aims to improve the accuracy of Kronecker product updates via the so-called projector-splitting scheme for dynamical low-rank approximation. Our main contributions are:
\begin{itemize}
\item  We propose an algorithm to maintain the Kronecker product approximation of the Fisher matrix using projector-splitting integrator \citep{lubich2014projector, ceruti2022rank}. 
We incorporate this approach into SOAP optimizer by modifying the update formulas for Kronecker factors 
(see ~\Algref{alg:dykaf}). These updates add negligible overhead (see \Algref{alg:kron_proj_split}) and integrate seamlessly into the SOAP framework, using the same hyperparameters (see Section~\ref{sub:pt}).

\item We provide a theoretical analysis comparing the accuracy of the dynamical Kronecker approximation of Fisher matrix and the update formula of Shampoo and SOAP (see Section \ref{sec:fisher}). Our theoretical findings are supported by empirical evaluations.

\item We conduct a thorough evaluation of our method on language model fine-tuning (Section \ref{sub:ft}) and pretraining (Section \ref{sub:pt}) tasks showing a performance improvement in comparison with the state-of-the-art alternatives. 

\end{itemize}

\section{Notation and Background}
\label{sec:notation}
\subsection{Preliminaries}
In this section, we recap some well-known facts from linear algebra, see \citep{golub2013matrix}. Note that we will use row-major index ordering  as in Pytorch.
Further in the text, $\|A\|=\sqrt{\sum_{ij}A_{ij}^2}$ denotes the Frobenius norm.

\begin{definition}
  For any square matrix $A \in \mathbb{R}^{n \times n}$, let $\mathrm{off}(A) \in \mathbb{R}^{n \times n}$ be the matrix obtained by subtracting the diagonal part of $A$:
  \begin{equation*}
    \mathrm{off}(A) := A - \mathrm{diag}(A). 
  \end{equation*}
\end{definition} 

\begin{definition}
    For any matrix $A \in \mathbb{R}^{m \times n}$, let $A^{\circ \alpha}$ denote a Hadamard (elementwise) power of a matrix:
    \begin{equation*}
        (A^{\circ \alpha})_{ij} = A_{ij}^{\alpha}, \quad \forall~ i, j.
    \end{equation*}
\end{definition}

\begin{definition}
 Let $\mathrm{vec}(X) \in \mathbb{R}^{mn}$ denote a vector obtained by “stacking” rows of the matrix $X \in \mathbb{R}^{m\times n}$:
 \[\mathrm{vec}(X):=\begin{bmatrix}X[0, :]^{\top} \\ \vdots \\ X[n-1, :]^{\top}\end{bmatrix}.\]
\end{definition}
It is well known that for $B \in \mathbb{R}^{m_1\times n_1}, C \in \mathbb{R}^{m_2\times n_2}$, and $X \in \mathbb{R}^{n_1\times n_2}$, the following identity holds:
\begin{equation}\label{eq:kron_product_vectorization}
	Y = BXC^{\top} \Leftrightarrow \mathrm{vec}(Y)=(B \otimes C)\mathrm{vec}(X),
\end{equation}
where $\otimes$ denotes the Kronecker product.
Similarly to the definition of the vectorization operation, let us define an inverse operation called matricization.
\begin{definition}
    For a vector $x \in \mathbb{R}^{mn}$ let the matricization operator $\mathrm{Mat}(x) \in \mathbb{R}^{m \times n}$ denote a matrix which is obtained by ``packing'' $x$ into an $m$-by-$n$ matrix in the row-major order:
    \begin{equation*}
       \mathrm{Mat}(x) :=
       \begin{bmatrix}
		   x[0{:}n]^{\top}\\
            \dots\\
			x[in{:} (i+1)n]^{\top}\\
            \dots\\
			x[(m-1)n{:} mn]^{\top}
       \end{bmatrix}.
    \end{equation*}
\end{definition}
Let  $A \in \mathbb{R}^{m_1 m_2\times n_1 n_2}$. 
The Nearest Kronecker Product (NKP) problem reads as: 
\begin{equation}\label{eq:nkp}
\| A - B \otimes C \|\to\min_{B, C},
\end{equation}
where $B \in \mathbb{R}^{m_1\times n_1}$ and $ C \in \mathbb{R}^{m_2\times n_2}$. This problem can be solved using a rearranged version of $A$.
\begin{definition}Let us define a rearrangement operator $\mathcal{R}(\cdot)$, which rearranges elements from $A \in \mathbb{R}^{m_1m_2\times n_1n_2}$ into $\mathcal{R}(A) \in \mathbb{R}^{m_1n_1\times m_2n_2}$ in the following way:
\[\mathcal{R}(A)[m_2 i+i', n_2j+j']=A[n_1i+j, n_2i'+j'].\]
\end{definition}
The rearrangement operator has several useful properties. In particular, it maps Kronecker product into a rank-1 matrix:
\begin{equation*}
	\mathcal{R}(B\otimes C)=\text{vec}(B)\text{vec}(C)^{\top}.
\end{equation*}
This property of the rearrangement operator helps to obtain the following fact.
\begin{proposition}[\cite{golub2013matrix}]\label{prop:rearrangement_golub}
The NKP approximation problem~\ref{eq:nkp} is equivalent to finding the best rank-1 approximation to $\mathcal{R}(A)$:
\begin{equation*}
	\begin{split}
	 \|\mathcal{R}(A) -\mathrm{vec}(B) \mathrm{vec}(C)^{\top}\| \to \min_{B, C}.
	\end{split}
\end{equation*}

\end{proposition}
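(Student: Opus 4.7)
The plan is to exploit two structural facts about the rearrangement operator $\mathcal{R}$ stated (or immediately available) in the excerpt: (i) $\mathcal{R}$ is a bijective permutation of the entries of a matrix, and (ii) $\mathcal{R}(B\otimes C) = \mathrm{vec}(B)\,\mathrm{vec}(C)^{\top}$. From these, the equivalence between the NKP problem and a rank-1 approximation problem follows with essentially no computation.

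First, I would check that $\mathcal{R}$ is an isometry in the Frobenius norm. Since $\mathcal{R}:\mathbb{R}^{m_1 m_2 \times n_1 n_2}\to \mathbb{R}^{m_1 n_1 \times m_2 n_2}$ is defined by the index relation
\[
\mathcal{R}(A)[m_2 i + i', n_2 j + j'] = A[n_1 i + j, n_2 i' + j'],
\]
it simply rearranges the entries of $A$ into a matrix of the same total size $m_1 m_2 n_1 n_2$. Because the Frobenius norm is the $\ell^2$-norm of the entries, any such permutation preserves it, and $\mathcal{R}$ is linear, so
\[
\|\mathcal{R}(X)\| = \|X\| \quad\text{and}\quad \mathcal{R}(X - Y) = \mathcal{R}(X) - \mathcal{R}(Y).
\]

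Second, I would apply $\mathcal{R}$ inside the objective of the NKP problem and use property (ii):
\[
\|A - B\otimes C\| = \|\mathcal{R}(A - B\otimes C)\| = \|\mathcal{R}(A) - \mathrm{vec}(B)\mathrm{vec}(C)^{\top}\|.
\]
Thus, minimizing the left-hand side over $B\in\mathbb{R}^{m_1\times n_1}$ and $C\in\mathbb{R}^{m_2\times n_2}$ is equivalent to minimizing the right-hand side over the same variables. Finally, since every rank-1 matrix in $\mathbb{R}^{m_1 n_1 \times m_2 n_2}$ can be written as $uv^{\top}$ with $u\in\mathbb{R}^{m_1 n_1}$, $v\in\mathbb{R}^{m_2 n_2}$, and since $\mathrm{vec}$ is a bijection between $\mathbb{R}^{m_1\times n_1}$ and $\mathbb{R}^{m_1 n_1}$ (likewise for $C$), the feasible set on the right coincides with the set of all rank-1 matrices of the correct dimensions. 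Hence the problem reduces to best rank-1 approximation of $\mathcal{R}(A)$.

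The only slightly delicate step is confirming that the index map in the definition of $\mathcal{R}$ is indeed a bijection between the entries of the input and output (that is, every index pair $(n_1 i + j,\; n_2 i' + j')$ on the left corresponds to exactly one $(m_2 i + i',\; n_2 j + j')$ on the right, for $i\in[m_1]$, $i'\in[m_2]$, $j\in[n_1]$, $j'\in[n_2]$). This is a routine combinatorial check via the uniqueness of the mixed-radix decomposition, and it is the main (and only) obstacle; everything else is immediate from the identities already recorded in the excerpt.
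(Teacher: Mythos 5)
Your argument is correct and is exactly the standard one from the cited reference (Van Loan's NKP construction): $\mathcal{R}$ is a linear entry permutation, hence a Frobenius isometry, so $\|A-B\otimes C\|=\|\mathcal{R}(A)-\mathrm{vec}(B)\mathrm{vec}(C)^{\top}\|$, and the images $\mathrm{vec}(B)\mathrm{vec}(C)^{\top}$ sweep out all matrices of rank at most one. The paper itself states this proposition without proof, citing \citep{golub2013matrix}, so there is nothing to contrast with; your write-up fills that gap correctly.
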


Thus, the proposition above allows us to approach the NKP problem for a certain matrix $A \in \mathbb{R}^{mn \times mn}$ as a rank-1 approximation of the rearranged matrix $\mathcal{R}(A) \in \mathbb{R}^{m^2 \times n^2}$.

\section{The Fisher Information Matrix Angle of Shampoo and SOAP}
Let us first define the empirical Fisher information matrix $F_t \in \mathbb{R}^{mn\times mn}$ for a layer with a weight matrix $W \in \mathbb{R}^{m \times n}$ as
\begin{equation}
    \label{eq:fisher}
	F_t := \sum_{i=1}^t \mathrm{vec}(G_i)\mathrm{vec}(G_i)^{\top},
\end{equation}
where $G_i \in \mathbb{R}^{m \times n}$ is the gradient with respect to the layer $W$ on the $i$-th optimization step. Note that in this work we consider only matrix-shaped parameters.

Recent works \citep{martens2015optimizing, gupta2018shampoo, vyas2024soap} have investigated methods to approximate the Fisher matrix with a Kronecker product of two smaller matrices $L_t \in \mathbb{R}^{m \times m}, R_t \in \mathbb{R}^{n \times n}$.
In particular, the authors of Shampoo \citep{gupta2018shampoo} proposed to
update Kronecker factors using simple formulas based on the current gradient $G_{t} \in \mathbb{R}^{m \times n}$ of the layer:
\begin{equation}\label{eq:shampoo_update_rule}
    L_{t+1} = L_{t} + G_tG_t^{\top}, \quad R_{t+1} = R_{t} + G_{t}^{\top}G_{t}.
\end{equation}
Unfolding these recurrent formulas and starting with $L_0=\varepsilon I, R_0=\varepsilon I$, \citep{gupta2018shampoo} aims to substitute the exact empirical Fisher matrix with its theoretically-proven upper bound in the Kronecker product form.
Under assumption that every $G_i$ has rank at most $r$, the authors obtain the following upper bound: 
\begin{equation*}
\begin{split}
	\varepsilon I_{mn} + &\frac{1}{r} F_t \preceq \\ \preceq & \left(\varepsilon I + \sum_i^t G_iG_i^{\top}\right)^{\frac{1}{2}} \otimes \left(\varepsilon I + \sum_i^t G_i^{\top}G_i\right)^{\frac{1}{2}},
\end{split}
\end{equation*}
where $A \preceq B$ implies that $B - A$ is a symmetric positive definite matrix. As a result, Shampoo's preconditioned gradient $G_t'$ can be derived using the following formula:

\begin{equation*}
	\begin{split}
		\mathrm{vec}(G_t') &= F_t^{-\frac{1}{2}}\mathrm{vec}(G_t) \approx (L_t^{-\frac{1}{4}} \otimes R_t^{-\frac{1}{4}})\mathrm{vec}(G_t) =\\&= \mathrm{vec}(L_t^{-\frac{1}{4}}G_tR_t^{-\frac{1}{4}}).
	\end{split}
\end{equation*}

However, the upper bound defined in ~\Eqref{eq:shampoo_update_rule} may be far from $F_t$ (see Figure~\ref{fig:random}), and, what is more, taking matrix inverse root of nearly singular matrices

on each optimization step leads to numerical instabilities and is computationally demanding.

SOAP \citep{vyas2024soap}  avoids the instability issue by rotating the gradient to align it with the eigenspace of the Fisher matrix and use the diagonal part for preconditioning. 
The idea behind this approach is that if we are given eigenvalue decompositions of $L_t = Q_{L}\Lambda_LQ_{L}^{\top}$ and $R_t = Q_{R}\Lambda_R Q_{R}^{\top}$, the eigenspace of the Fisher matrix can be approximated using the Kronecker product $Q_L\otimes Q_R$. 
Indeed, if $\tilde{F_t} \approx L_t \otimes R_t$ then we can rewrite $\tilde{F_t}$ as follows:
\begin{equation*}
    \begin{split}
    \tilde{F_t} \approx L_t \otimes R_t = &(Q_L \Lambda_L Q_L^{\top}) \otimes (Q_R \Lambda_R Q_R^{\top}) =  \\ = & (Q_L \otimes Q_R)^{\top}(\Lambda_L \otimes \Lambda_R)(Q_L \otimes Q_R).
    \end{split}
\end{equation*}
In the eigenbasis of the Kronecker product approximation, the Fisher matrix is expected to be  closer to diagonal (we obtain a formal statement in Section~\ref{sec:optimizer}). As a result, one can replace it with a diagonal matrix in this basis, which is more accurate than for the original matrix without rotations, like in Adam~\citep{kingma2014adam}.
Let $\mathrm{vec}(V_t)$ be the diagonal part of the rotated Fisher matrix. Then, we have: 

\begin{equation*}
	\begin{split}
		& \mathrm{vec}(G'_t)=F_t^{-\frac{1}{2}}\mathrm{vec}(G_t)\approx\\&\approx (Q_L\otimes Q_R)\diag(\mathrm{vec}(V_t))^{-\frac{1}{2}}(Q_L\otimes Q_R)^{\top}\mathrm{vec}(G_t)=\\&=(Q_L\otimes Q_R)\diag(\mathrm{vec}(V_t^{\circ -\frac{1}{2}}))\mathrm{vec}(Q_L^{\top}G_tQ_R) = \\ & = ~\mathrm{vec}(Q_L((Q_L^{\top}G_tQ_R) \oslash V_t^{\circ \frac{1}{2}})Q_R^{\top}),
	\end{split}
\end{equation*}
where $\oslash$ denotes elementwise division.

Although \citep{vyas2024soap} introduces a different approach compared to Shampoo, they can be unified from the matrix analysis perspective.
Indeed, Shampoo preconditioning
can be rewritten as follows:
\begin{equation*}
  \begin{split}
	  \mathrm{vec}(G'_t) &=  (L^{-\frac 14}_t \otimes R^{-\frac 14}_t)\mathrm{vec}(G_t)  \\& = (Q_L\Lambda_L^{-\frac 14}Q_L^{\top}) \otimes (Q_R\Lambda_R^{-\frac 14}Q_R^{\top})\mathrm{vec}(G_t)=  
	  \\ & = (Q_L \otimes Q_R) (\Lambda_L \otimes \Lambda_R)^{-\frac 14} \mathrm{vec}(Q_L^{\top}G_tQ_R) = \\ & =  \mathrm{vec}(Q_L((Q_L^{\top}G_tQ_R) \oslash (\lambda_L\lambda_R^{\top})^{\circ -\frac{1}{4}})Q_R^{\top}).
  \end{split}
\end{equation*}

The expression above shows that Shampoo optimizer also rotates the gradient and applies elementwise division in the rotated space, but with an additional constraint for the diagonal.

We argue that maintaining a more accurate approximation of $L_t$ and $R_t$ and, consequently, of $Q_L$ and $Q_R$ may improve the approximation of the Fisher matrix eigenspace in which we rotate the gradient and make the rotated matrix ``more diagonal'' in terms of the Frobenius norm.

In order to improve the quality of the approximation, we seek to use dynamic low-rank approximation approach, which recently showed its efficiency in full-matrix preconditioning methods in the case of dynamic Cholesky decomposition \citep{matveeva2025dynamic}.

\section{Dynamic Low-rank Approximation of the Fisher Information Matrix}
\label{sec:fisher}

Building on the definitions from Section \ref{sec:notation} and their properties, in this work we aim to efficiently and accurately maintain the Kronecker product approximation to the empirical Fisher matrix over the optimization steps. Proposition \ref{prop:rearrangement_golub} tells us that the approximation of $F_t$ via a Kronecker product can be treated as a rank-1 approximation of the rearranged matrix:
\begin{equation*}
	\|F_t-L_t\otimes R_t\|=\|\mathcal{R}(F_{t}) - \mathrm{vec}(L_t)\mathrm{vec}(R_t)^{\top}\|\to \min_{L_t, R_t}
\end{equation*}

In \citep{lubich2014projector, ceruti2022rank}, the authors introduce projector-splitting integrators to maintain an accurate low-rank approximation of the time-dependent matrix trajectory.
This approach can also be applied when the matrix undergoes discrete time updates. 
In Algorithm  \ref{alg:proj_split}, we adapt the projector-splitting algorithm provided in \citep{ceruti2022rank} for the discrete case. Here and further we will denote a general version of the algorithm as \texttt{proj\_split}.
\begin{algorithm}[h!]
	\caption{\texttt{proj\_split}. Dynamical low-rank approximation with discrete update \citep{ceruti2022rank}}\label{alg:proj_split}
\begin{algorithmic}[1]
	\Require  $\tilde{F_t} = U_tS_tV_t^{\top}$ -- current low-rank approximation to $F_t$ ($U \in \mathbb{R}^{m\times r}$, $V \in \mathbb{R}^{n \times r}$ have orthonormal columns, $S \in \mathbb{R}^{r \times r}$ is invertible but not necessary diagonal); $\Delta F_t \in \mathbb{R}^{m \times n}$ -- current update of $F_t$ at the step $t$.
\State $\hat{U_t} = U_tS_t + \Delta F_t V_t$;
\State $U_{t+1}\hat{S_{t+1}} = QR(\hat{U_t})$;
\State $\hat{V_t} = V_tS_t^{\top} + \Delta F_t^{\top} U_t$;
\State $V_{t+1}\tilde{S_{t+1}} = QR(\hat{V_{t}})$;
\State $S_{t+1} = U_{t+1}^{\top} (\tilde{F_t} + \Delta F_t) V_{t+1}$;\\
\Return $\tilde{F}_{t+1} = U_{t+1} S_{t+1} V_{t+1}^{\top} \approx F_{t+1}$.
\end{algorithmic}	
\end{algorithm}

Note that \Algref{alg:proj_split} depends only on the current low-rank approximation $\tilde{F_t}$ and the current update $\Delta F_t$. Moreover, the provided algorithm in the case of dynamical rank-1 approximation, which is our case, does not require QR decompositions at all, as they are equivalent to vector normalization.

Now we can adapt Algorithm \ref{alg:proj_split} for our purposes. Here and further we will denote it as \texttt{kron\_proj\_split}. Following the \Eqref{eq:fisher}, after the rearrangement the update of the Fisher matrix  has a particular Kronecker product structure:
\begin{equation}\label{eq:fisher_update} \mathcal{R}(F_t)=\mathcal{R}(F_{t-1})+G_{t}\otimes G_t^{\top},\end{equation}
implying that $\Delta F_t = G_{t}\otimes G_t^{\top}$.
Using \Eqref{eq:kron_product_vectorization}, we observe that the projector-splitting algorithm has an efficient implementation, which does not require matrix-vector products with dense matrices of $mn \times mn$ size.
In \Algref{alg:kron_proj_split}, we detail a computationally-efficient implementation of the dynamical low-rank approximation from Algorithm~\ref{alg:proj_split}  for the matrix $F_t$ defined in \Eqref{eq:fisher}. A generalization for tensors is presented in Algorithm \ref{alg:kron_proj_split_tensor} (Appendix \ref{apx:tensors}).

\begin{algorithm}
  \caption{ 
  \texttt{kron\_proj\_split}. Dynamical Kronecker approximation of Fisher matrix.
  }\label{alg:kron_proj_split}
\begin{algorithmic}[1]
	\Require 
    $L_t \in \mathbb{R}^{m \times m}, R_t \in \mathbb{R}^{n \times n}$ -- Kronecker product approximation of the Fisher matrix on the optimization step $t$, $G_t \in \mathbb{R}^{m \times n}$ -- gradient with respect to the layer $W_t$.
	\State $\hat{L_t} = \displaystyle L_t \left\|{R_t}\right\|_{{F}} + G_t (R_t/\left\|{R_t}\right\|_{{F}})G_t^{\top}$
	\State $\hat{R_t} = \displaystyle R_t \left\|{L_t}\right\|_{{F}} + G_t^{\top}(L_t/\left\|{L_t}\right\|_{{F}})G_t$
	\State $L_{t+1} = \hat{L_t}/  \|{\hat{L_t}}\|_{{F}}$
	\State $R_{t+1} = \hat{R_t} / \|{\hat{R_t}}\|_{{F}}$
	\State $S = \displaystyle \langle {L_t}, {L_{t+1}}\rangle_{{F}}\langle {R_t}, {R_{t+1}}\rangle_{{F}} + \langle {L_{t+1}}, {G_tR_{t+1}G_t^{\top}}\rangle_{{F}}$
  \State \Return $S^{1/2}L_{t+1}, S^{1/2}R_{t+1}$
\end{algorithmic}
\end{algorithm}

This algorithm can be readily inserted in optimizers that rely upon the Kronecker product structure such as SOAP and Shampoo  without noticeable memory and computational overhead.
Moreover, we provide a theoretical explanation of the validity of such approach in Proposition \ref{thm:thm_dynamical} which shows that if the Fisher matrix can be accurately approximated with the help of the Kronecker product during training (see \citep{martens2015optimizing} for motivation), the projector-splitting is able to capture the Kronecker product structure on each step.
Let us consider one step of the integrator.

\begin{proposition} \label{thm:thm_dynamical}

Let $F_0$ and $F_1$ be such that:
\begin{equation*}
	F_{0}=A_{0}\otimes B_{0}+E_{0},
\end{equation*}
\begin{equation*}
	F_{1}=A_{1}\otimes B_{1}+E_{1},
\end{equation*}
where $\|E_i\|\leq \varepsilon \|F_i\|$ and
$\frac{|tr((A_0\otimes B_0)^\top (A_1\otimes B_1))|}{\|A_0\otimes B_0\|\|A_1\otimes B_1\|}\geq c>0.$
Then, one step of the dynamical low-rank approximation from \Algref{alg:kron_proj_split} starting with $L_0, R_0=A_0, B_0$ yields the approximation that of $F_1$ such that:
\[L_1\otimes R_1=A_1\otimes B_1+ \widehat E_1+O(\varepsilon^2),\]
where $\| \widehat E_1\|\leq \left(1+\frac{2}{c}\right) (\|E_1\|+\|E_0\|)$.
\end{proposition}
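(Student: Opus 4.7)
The plan is to work entirely in the rearranged space, where Proposition~\ref{prop:rearrangement_golub} identifies each Kronecker product with a rank-1 matrix. Setting $f_i := \mathrm{vec}(A_i)$ and $g_i := \mathrm{vec}(B_i)$, the hypotheses become $\mathcal{R}(F_i) = f_i g_i^\top + \mathcal{R}(E_i)$ with $\|\mathcal{R}(E_i)\| = \|E_i\|$, and Algorithm~\ref{alg:kron_proj_split} reduces to the rank-1 instance of Algorithm~\ref{alg:proj_split} applied to $\tilde{F}_0 = f_0 g_0^\top$ with rearranged update $\mathcal{R}(F_1-F_0) = f_1 g_1^\top - f_0 g_0^\top + \mathcal{R}(E_1 - E_0)$; its output $L_1 \otimes R_1$ corresponds to $s_1 u_1 v_1^\top$. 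A preliminary observation is that expanding $\mathrm{tr}((A_0\otimes B_0)^\top (A_1\otimes B_1)) = \langle f_0, f_1\rangle \langle g_0, g_1\rangle$ and $\|A_i\otimes B_i\| = \|f_i\|\|g_i\|$ turns the cosine hypothesis into $|\cos\angle(f_0,f_1)\cos\angle(g_0,g_1)| \geq c$; since each cosine lies in $[-1,1]$, each of $|\cos\angle(f_0,f_1)|$ and $|\cos\angle(g_0,g_1)|$ is individually at least $c$.

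Next I would trace one step of the integrator from $u_0 = f_0/\|f_0\|$, $v_0 = g_0/\|g_0\|$, $s_0 = \|f_0\|\|g_0\|$. The identities $u_0 s_0 = f_0 (g_0^\top v_0)$ and $v_0 s_0 = g_0 (f_0^\top u_0)$ produce the essential cancellations against the $-f_0 g_0^\top$ term in $\mathcal{R}(F_1-F_0)$, leaving
\[
\hat{u} = \gamma f_1 + \epsilon_u, \qquad \hat{v} = \delta g_1 + \epsilon_v,
\]
with $\gamma := g_1^\top v_0$ satisfying $|\gamma| \geq c\|g_1\|$, $\delta := f_1^\top u_0$ satisfying $|\delta| \geq c\|f_1\|$, and residuals $\epsilon_u = \mathcal{R}(E_1 - E_0) v_0$, $\epsilon_v = \mathcal{R}(E_1 - E_0)^\top u_0$. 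Using $\|\mathcal{R}(M)\|_{\mathrm{op}} \leq \|M\|$, one obtains $\|\epsilon_u\|, \|\epsilon_v\| \leq \|E_1\| + \|E_0\|$. After normalization $u_1 = \hat u/\|\hat u\|$, $v_1 = \hat v/\|\hat v\|$, a first-order expansion of $x \mapsto x/\|x\|$ gives $\sin\angle(u_1, f_1) \leq \|\epsilon_u\|/(|\gamma|\|f_1\|) + O(\varepsilon^2)$, and symmetrically for $v_1, g_1$. Since $\tilde{F}_0 + \mathcal{R}(F_1 - F_0) = \mathcal{R}(F_1) - \mathcal{R}(E_0)$, the scalar update is $s_1 = u_1^\top \bigl(f_1 g_1^\top + \mathcal{R}(E_1 - E_0)\bigr) v_1$.

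Finally I would decompose the error as
\[
s_1 u_1 v_1^\top - f_1 g_1^\top = u_1 \bigl(u_1^\top \mathcal{R}(E_1 - E_0) v_1\bigr) v_1^\top + (P_u - I) f_1 g_1^\top P_v + f_1 g_1^\top (P_v - I),
\]
with $P_u = u_1 u_1^\top$ and $P_v = v_1 v_1^\top$. The first summand has Frobenius norm at most $\|E_1\| + \|E_0\|$. Each remaining rank-1 piece is controlled by, for instance, $\|(P_u - I) f_1 g_1^\top P_v\| \leq \|f_1\|\, \sin\angle(u_1, f_1)\, \|g_1\| \leq \|\epsilon_u\|\|g_1\|/|\gamma| + O(\varepsilon^2) \leq (\|E_1\| + \|E_0\|)/c + O(\varepsilon^2)$, and the other symmetrically. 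Summing the three contributions yields $(1 + 2/c)(\|E_1\| + \|E_0\|) + O(\varepsilon^2)$, and Frobenius invariance of $\mathcal{R}$ lifts the bound back to $\widehat{E}_1 = L_1 \otimes R_1 - A_1 \otimes B_1$. The main obstacle will be tracking the Taylor expansion of $1/\|\gamma f_1 + \epsilon_u\|$ carefully enough that the correction to the linear sine estimate is genuinely $O(\varepsilon^2)$ rather than hiding a factor that degrades with $1/c$; this relies on $\|\epsilon_u\| \ll |\gamma|\|f_1\|$, which follows from $|\gamma|\|f_1\| \geq c\|f_1\|\|g_1\|$ while $\|\epsilon_u\| \lesssim \varepsilon \|f_1\|\|g_1\|$.
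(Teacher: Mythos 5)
Your proposal is correct and follows essentially the same route as the paper's proof: pass to the rearranged space, exploit the cancellation of the $f_0 g_0^\top$ term in one integrator step, use Cauchy--Schwarz to split the coherence condition into per-factor bounds $|\cos\angle(f_0,f_1)|,|\cos\angle(g_0,g_1)|\ge c$, write the output as $P_u(\tilde F_1-\tilde E_0)P_v$, and bound the first-order error via orthoprojectors. Your organization of the error into the three pieces $P_u\mathcal{R}(E_1-E_0)P_v$, $(P_u-I)f_1g_1^\top P_v$, and $f_1g_1^\top(P_v-I)$ with sine-angle estimates is a cleaner packaging of the paper's explicit term-by-term Taylor expansion, but it is mathematically the same argument and yields the identical constant $1+2/c$.
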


\begin{proof}See Appendix \ref{apx:thm_dynamical}.
\end{proof}
 
On the contrary, the accuracy of Shampoo approximation depends on the degree of collinearity of the gradients (mutual coherence $\mu$). 
\begin{definition} The mutual coherence $\mu$ of a set of vectors $\{g_i\}_{i=1}^{n}$ is defined as: 
\begin{equation*}
	\mu=\max_{i\not=j}\frac{|g_i^{\top}g_j|}{\|g_i\|\|g_j\|}.    
\end{equation*}

Mutual coherence $\mu$ ranges from 0 for pairwise orthogonal vectors (if $n$ is less than size of $g_i$) to 1 for collinear.
\end{definition}

The following proposition shows that the less aligned the vectors $\mathrm{vec}(G_i)$ are, the more is the difference in the norms of Fisher matrix and Shampoo estimate.
\begin{proposition}\label{thm:thm_coherence} Let $\mu_t$ be the mutual coherence of the gradients $\{\mathrm{vec}(G_i)\}_{i=1}^t$. 
	Let $L_t=\sum_i^tG_iG_i^{\top}, R_t=\sum_i^tG_i^{\top}G_i$ (as in Shampoo and SOAP). Then
\[\left\|L_{t}^{\frac{1}{2}}\otimes R_{t}^{\frac{1}{2}}\right\|^2-\left\|F_t\right\|^2\geq (1-\mu_t^2)  \sum_{i}^t\sum_{j\not=i}^t\|G_i\|^2\|G_j\|^2.\]
\end{proposition}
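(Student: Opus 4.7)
The plan is to compute both sides of the inequality explicitly by expanding the squared Frobenius norms and then apply the mutual coherence bound term by term.

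First I will rewrite $\|F_t\|^2$. Since $F_t = \sum_i \mathrm{vec}(G_i)\mathrm{vec}(G_i)^{\top}$, expanding the Frobenius norm gives
\begin{equation*}
\|F_t\|^2 = \sum_{i,j} (\mathrm{vec}(G_i)^{\top}\mathrm{vec}(G_j))^2 = \sum_{i,j} \langle G_i, G_j\rangle_F^2.
\end{equation*}

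Next I will compute $\|L_t^{1/2}\otimes R_t^{1/2}\|^2$. Using the identity $\|A\otimes B\|^2 = \|A\|^2\|B\|^2$ together with the fact that $\|M^{1/2}\|^2 = \mathrm{tr}(M)$ for symmetric positive semidefinite $M$, I obtain
\begin{equation*}
\|L_t^{1/2}\otimes R_t^{1/2}\|^2 = \mathrm{tr}(L_t)\,\mathrm{tr}(R_t).
\end{equation*}
Since $\mathrm{tr}(G_iG_i^{\top}) = \mathrm{tr}(G_i^{\top}G_i) = \|G_i\|^2$, both traces equal $\sum_i \|G_i\|^2$, so the product becomes $\sum_{i,j}\|G_i\|^2\|G_j\|^2$.

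Taking the difference and separating diagonal from off-diagonal contributions, the $i=j$ terms cancel exactly (both equal $\|G_i\|^4$), leaving
\begin{equation*}
\|L_t^{1/2}\otimes R_t^{1/2}\|^2 - \|F_t\|^2 = \sum_{i\neq j}\Bigl(\|G_i\|^2\|G_j\|^2 - \langle G_i, G_j\rangle_F^2\Bigr).
\end{equation*}
The definition of mutual coherence gives $\langle G_i, G_j\rangle_F^2 \leq \mu_t^2 \|G_i\|^2\|G_j\|^2$ for $i\neq j$, so each summand is bounded below by $(1-\mu_t^2)\|G_i\|^2\|G_j\|^2$, which yields the claim after summation.

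There is no serious obstacle here: the only point requiring a moment of care is the identity $\|L_t^{1/2}\otimes R_t^{1/2}\|^2 = \mathrm{tr}(L_t)\,\mathrm{tr}(R_t)$, which relies on the Kronecker product norm factorization and the trace formula for the squared Frobenius norm of a positive semidefinite square root. Everything else follows by direct expansion and one application of the Cauchy--Schwarz-type bound encoded in the definition of $\mu_t$.
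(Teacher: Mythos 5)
Your proof is correct and follows essentially the same route as the paper's: both compute $\|L_t^{1/2}\otimes R_t^{1/2}\|^2=\mathrm{tr}(L_t)\,\mathrm{tr}(R_t)=\bigl(\sum_i\|G_i\|^2\bigr)^2$ via the Kronecker norm factorization and the trace identity for the PSD square root, expand $\|F_t\|^2=\sum_{i,j}\langle G_i,G_j\rangle_F^2$, and bound the off-diagonal terms by the mutual coherence. The only cosmetic difference is that you cancel the diagonal terms explicitly before applying the coherence bound, while the paper bounds $\|F_t\|^2$ from above first and then subtracts.
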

\begin{proof}See Appendix \ref{apx:thm_coherence}.
\end{proof}

If the gradients are close to collinear, then $\mu_t\approx 1$ and the norm of Shampoo approximation is close to the true Fisher norm. However, when the directions of the gradients vary, which may occur in practice, the Shampoo approximation becomes less accurate.

To also empirically support our findings, we sample $G_i\in \mathbb{R}^{32\times32}$ from $\mathcal{N}(0, I)$ and sum them with exponential moving average $\beta=0.9$, obtaining the Fisher matrix, following Shampoo optimizer \citep{gupta2018shampoo} setup.
Figure \ref{fig:random} demonstrates the error Fisher matrix approximation with different methods. The error of DyKAF approximation is almost the same as of the best Kronecker product approximation, which means that it is significantly closer to true Fisher matrix, than Shampoo.
\begin{figure}[h!]
    \centering
    \includegraphics[width=1\linewidth]{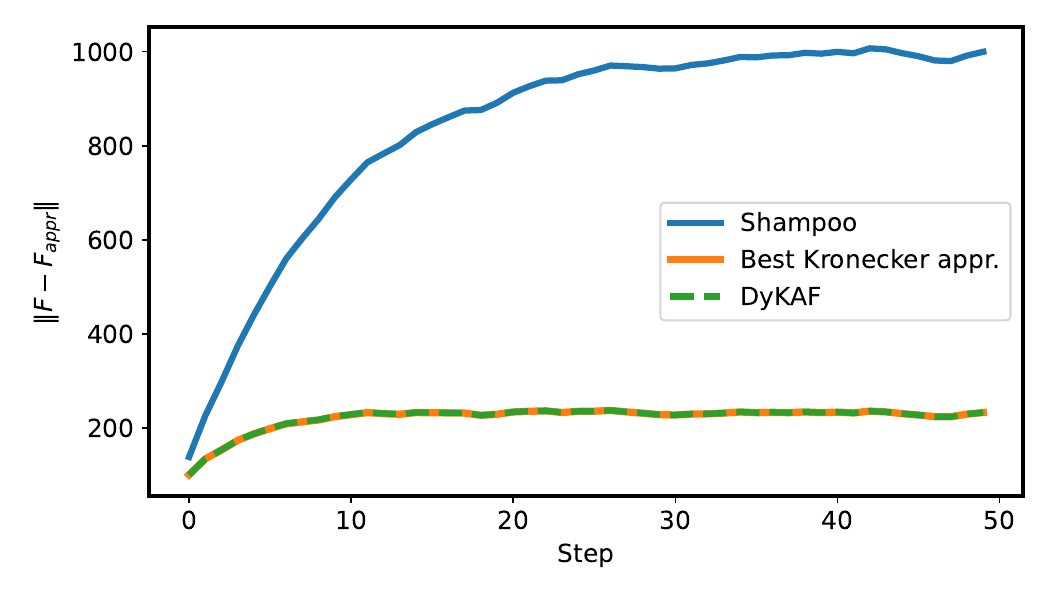}
    \caption{Fisher matrix approximation error with different methods in the case when $G_i$ are sampled from $\mathcal{N}(0, I)$.}
    \label{fig:random}
\end{figure}

\section{DyKAF Optimizer}
\label{sec:optimizer}
We are now ready to define our modification of the SOAP optimizer, which is the core contribution of our work.
DyKAF optimizer (Algorithm \ref{alg:dykaf}) incorporates a dynamical Kronecker approximation of the Fisher information matrix using Algorithm \ref{alg:kron_proj_split}.
Before describing the DyKAF optimizer, let us start with the initialization of Kronecker factors in Algorithm \ref{alg:dykaf}.
\subsection{Initialization}
Unlike Shampoo and SOAP, which initialize Kronecker factors with scaled identity matrices, we start with the best Kronecker approximation of the Fisher matrix on the first step $F_1=\mathrm{vec}(G_1) \mathrm{vec}(G_1)^{\top}$. 
This initialization can be efficiently obtained using the power method for $G_1$, 
as it stated in Proposition~\ref{lemma:init}.
\begin{proposition}\label{lemma:init}
The solution to
\[
\min_{L, R} \|\mathrm{vec}(G_1) \mathrm{vec}(G_1)^\top - L \otimes R\|
\]
is given by \(L = \sigma_1(G_1) u_1 u_1^\top\) and \(R = \sigma_1(G_1) v_1 v_1^\top\), where \(u_1\) and \(v_1\) are the left and the right singular vectors of \(G_1\) corresponding to the largest singular value \(\sigma_1(G_1)\).
\end{proposition}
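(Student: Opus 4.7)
The plan is to reduce the problem to a rank-1 SVD computation via Proposition \ref{prop:rearrangement_golub} and then read off the optimal $L$ and $R$ from the top singular triple. First I would invoke Proposition \ref{prop:rearrangement_golub} to rewrite the objective $\|\mathrm{vec}(G_1)\mathrm{vec}(G_1)^{\top} - L\otimes R\|$ as $\|\mathcal{R}(\mathrm{vec}(G_1)\mathrm{vec}(G_1)^{\top}) - \mathrm{vec}(L)\mathrm{vec}(R)^{\top}\|$, so the task becomes finding the best rank-1 approximation of $\mathcal{R}(F_1)$ where $F_1 = \mathrm{vec}(G_1)\mathrm{vec}(G_1)^{\top}$.

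Next I would expand $G_1$ in its thin SVD $G_1 = \sum_{k} \sigma_k u_k v_k^{\top}$. Using the standard identity $\mathrm{vec}(u v^{\top}) = u \otimes v$ (consistent with the row-major convention fixed in Section \ref{sec:notation}), this gives $\mathrm{vec}(G_1) = \sum_k \sigma_k (u_k \otimes v_k)$, and hence
\begin{equation*}
F_1 = \sum_{k,k'} \sigma_k \sigma_{k'} (u_k \otimes v_k)(u_{k'} \otimes v_{k'})^{\top} = \sum_{k,k'} \sigma_k \sigma_{k'} (u_k u_{k'}^{\top}) \otimes (v_k v_{k'}^{\top}).
\end{equation*}
Applying the rearrangement operator term-by-term (using its linearity together with the key property $\mathcal{R}(B\otimes C)=\mathrm{vec}(B)\mathrm{vec}(C)^{\top}$) yields
\begin{equation*}
\mathcal{R}(F_1) = \sum_{k,k'} \sigma_k \sigma_{k'}\, \mathrm{vec}(u_k u_{k'}^{\top})\,\mathrm{vec}(v_k v_{k'}^{\top})^{\top}.
\end{equation*}

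Then I would verify that this expansion is already an SVD of $\mathcal{R}(F_1)$. The orthonormality $\langle u_k u_{k'}^{\top}, u_l u_{l'}^{\top}\rangle_F = (u_k^{\top}u_l)(u_{k'}^{\top}u_{l'}) = \delta_{kl}\delta_{k'l'}$ (and analogously on the $v$ side) shows that the left and right factor families are orthonormal, so the $\sigma_k\sigma_{k'}$ are precisely the singular values of $\mathcal{R}(F_1)$. Since the largest is $\sigma_1^2$, attained uniquely (up to sign) at $(k,k')=(1,1)$ when $\sigma_1$ is simple, the Eckart--Young theorem gives the best rank-1 approximation $\sigma_1^2\,\mathrm{vec}(u_1 u_1^{\top})\mathrm{vec}(v_1 v_1^{\top})^{\top}$. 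Matching this with $\mathrm{vec}(L)\mathrm{vec}(R)^{\top}$ and resolving the scalar gauge freedom symmetrically yields $L = \sigma_1 u_1 u_1^{\top}$ and $R = \sigma_1 v_1 v_1^{\top}$, as claimed.

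The main obstacle I expect is bookkeeping rather than conceptual: I need to be careful that the vectorization convention used inside $\mathcal{R}$ agrees with the row-major convention of Section \ref{sec:notation} so that $\mathrm{vec}(uv^{\top}) = u\otimes v$ holds with the right ordering; a sign/permutation slip there would make $\mathcal{R}(F_1)$ look like the wrong matrix. A secondary, minor point is noting the scalar gauge freedom $L \mapsto \alpha L$, $R \mapsto \alpha^{-1} R$, which explains why the symmetric choice $\alpha=1$ produces the statement in the proposition while other choices give equally optimal factorizations.
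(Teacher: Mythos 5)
Your proof is correct and follows essentially the same route as the paper: both reduce the problem via the rearrangement operator to a rank-1 approximation, identify the SVD induced by the singular triples of $G_1$, and apply Eckart--Young. The only cosmetic difference is that your double sum $\sum_{k,k'}\sigma_k\sigma_{k'}\,\mathrm{vec}(u_k u_{k'}^{\top})\mathrm{vec}(v_k v_{k'}^{\top})^{\top}$ is exactly the expansion of $\mathcal{R}(F_1)=G_1\otimes G_1=(U\otimes U)(\Sigma\otimes\Sigma)(V\otimes V)^{\top}$, which the paper writes in closed form.
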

\begin{proof}
See Appendix~\ref{apx:proof_init}.
\end{proof}

\subsection{Adam Second Moment in the Rotated Space}

We additionally investigate the estimation of the Fisher matrix eigenvalues. SOAP approximates them using
the second moment $V_t \in \mathbb{R}^{m \times n}$ in the rotated space as follows:
\begin{equation*}
    V_t = \beta_2 V_{t-1} + (1-\beta_2) (G_t' \odot G_t').
\end{equation*}
In this work, we introduce an alternative way to estimate eigenvalues of the Fisher matrix.

Let us assume that $F$ has the following form:
\begin{equation*}
    F_t = L_t \otimes R_t + E_t,
\end{equation*}
where $L_t$ and $R_t$ are the current Kronecker product factors.
If we know eigenvalue decompositions $L_t = Q_L\Lambda_L Q_L$ and $R_t = Q_R\Lambda_R Q_R^{\top}$, then we have
\begin{align}
    \tilde{F_t} &= (Q_L \otimes Q_R)^{\top}F_t(Q_L \otimes Q_R) + E_t\nonumber\\&= \Lambda_L \otimes \Lambda_R + \tilde{E_t}, \label{eq:rotated_fisher_form}
\end{align}
where $\tilde{E_t} = (Q_L \otimes Q_R)^{\top}E_t(Q_L \otimes Q_R)$.
If we work under assumption that the Fisher matrix can be accurately approximated as a Kronecker product, then $\tilde{F_t}$ is closer to a diagonal matrix than $F_t$.
We provide a more formal version in the following proposition.
\begin{proposition}\label{thm:fisher_diag}
Let $F \in \mathbb{R}^{mn \times mn}$ be a symmetric matrix which can be represented in the following form:
  \begin{equation}\label{eq:weak_conditions_th1}
  	F_t = A_t \otimes B_t + E_t,
  \end{equation}
  where $E_t \in \mathbb{R}^{mn \times mn}$ satisfies $\|E_t\| \leqslant \left\|{\mathrm{off}(F)}\right\|_{{F_t}}$, and $A_t \in \mathbb{R}^{m \times m}, B_t \in \mathbb{R}^{n \times n}$ are symmetric matrices. Let $A_t = Q_A\Lambda_A Q_A^{\top}$, $B_t = Q_B\Lambda_B Q_B^{\top}$ be the eigenvalue decompositions. Then
  \begin{equation*}
	  \left\|{\mathrm{diag}\left((Q_A \otimes Q_B)^{\top}F_t(Q_A \otimes Q_B)\right)}\right\|_{{F}} \geqslant \left\|{\mathrm{diag}(F_t)}\right\|_{{F}}.
  \end{equation*}
  and
  \begin{equation*}
	  \left\|{\mathrm{off}\left((Q_A \otimes Q_B)^{\top}F_t(Q_A \otimes Q_B)\right)}\right\|_{{F}} \leqslant \left\|{\mathrm{off}(F_t)}\right\|_{{F}}.
  \end{equation*}
\end{proposition}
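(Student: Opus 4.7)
The plan is to reduce both bounds to a single Frobenius-norm estimate on the rotated error $\tilde E_t = (Q_A \otimes Q_B)^\top E_t (Q_A \otimes Q_B)$, and then use the orthogonal invariance of the Frobenius norm to pass freely between the diagonal and off-diagonal pieces.

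First I would introduce the rotated matrix $\tilde F_t = (Q_A \otimes Q_B)^\top F_t (Q_A \otimes Q_B)$ and, using the decomposition in \eqref{eq:weak_conditions_th1} together with the mixed-product property of the Kronecker product, rewrite it as
\begin{equation*}
  \tilde F_t = \Lambda_A \otimes \Lambda_B + \tilde E_t.
\end{equation*}
The key structural observation is that $\Lambda_A \otimes \Lambda_B$ is diagonal (since $\Lambda_A, \Lambda_B$ are diagonal and the Kronecker product of diagonal matrices is diagonal), so $\mathrm{off}(\tilde F_t) = \mathrm{off}(\tilde E_t)$. Therefore
\begin{equation*}
  \|\mathrm{off}(\tilde F_t)\|_F \;\leq\; \|\tilde E_t\|_F \;=\; \|E_t\|_F,
\end{equation*}
where the last equality uses the orthogonality of $Q_A \otimes Q_B$, which is a consequence of the orthogonality of $Q_A$ and $Q_B$ (Kronecker product of orthogonal matrices is orthogonal) and the unitary invariance of the Frobenius norm. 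Combined with the hypothesis $\|E_t\|_F \leq \|\mathrm{off}(F_t)\|_F$, this gives the second claimed inequality directly.

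For the first inequality, I would invoke the identity $\|A\|_F^2 = \|\mathrm{diag}(A)\|_F^2 + \|\mathrm{off}(A)\|_F^2$, valid for any square $A$. Applying it to both $F_t$ and $\tilde F_t$ and using once more the orthogonal invariance $\|\tilde F_t\|_F = \|F_t\|_F$ yields
\begin{equation*}
  \|\mathrm{diag}(\tilde F_t)\|_F^2 - \|\mathrm{diag}(F_t)\|_F^2 \;=\; \|\mathrm{off}(F_t)\|_F^2 - \|\mathrm{off}(\tilde F_t)\|_F^2 \;\geq\; 0,
\end{equation*}
so the diagonal mass cannot decrease.

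The argument is largely bookkeeping, so I do not anticipate a genuine technical obstacle; the only things worth stating carefully are (i) that $\Lambda_A \otimes \Lambda_B$ is diagonal so it contributes nothing to $\mathrm{off}(\tilde F_t)$, and (ii) that $Q_A \otimes Q_B$ is orthogonal, which is what lets us equate $\|\tilde E_t\|_F$ with $\|E_t\|_F$ and $\|\tilde F_t\|_F$ with $\|F_t\|_F$. The one subtle point is ensuring the hypothesis is correctly interpreted: the unsubscripted norm $\|E_t\|$ must be read as $\|E_t\|_F$ for the above chain to close, which is consistent with the convention fixed at the start of Section \ref{sec:notation}.
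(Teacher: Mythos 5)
Your proposal is correct and follows the same route as the paper: rotate by $Q_A\otimes Q_B$, use unitary invariance of the Frobenius norm to get $\|\mathrm{diag}(F_t)\|_F^2+\|\mathrm{off}(F_t)\|_F^2=\|\mathrm{diag}(\tilde F_t)\|_F^2+\|\mathrm{off}(\tilde F_t)\|_F^2$, and read off both claims. In fact your write-up is more complete than the paper's: the appendix proof asserts the final inequality $\|\mathrm{off}(F_t)\|_F^2-\|\mathrm{off}(\tilde F_t)\|_F^2\geqslant 0$ without justification and never invokes the hypothesis on $\|E_t\|$, whereas you supply exactly the missing link, namely $\mathrm{off}(\tilde F_t)=\mathrm{off}(\tilde E_t)$ because $\Lambda_A\otimes\Lambda_B$ is diagonal, hence $\|\mathrm{off}(\tilde F_t)\|_F\leqslant\|\tilde E_t\|_F=\|E_t\|_F\leqslant\|\mathrm{off}(F_t)\|_F$.
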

\begin{proof}
	See Appendix \ref{app:zlp}.
\end{proof}

The proposition states that if we have a accurate Kronecker product approximation of the Fisher matrix \citep{martens2015optimizing}, then the rotated matrix $\widetilde{F_t}$ is ``more diagonal'' than $F_t$ in terms of the Frobenius norm. 
It additionally highlights the importance of having a good approximation of the $Q_L$ and $Q_R$, as these matrices explicitly affect the proximity of the rotated matrix to the diagonal matrix.

The assumption that the rotated Fisher matrix is close to a diagonal matrix makes it also possible to try a lightweight approximation and efficiently invert it.

At the same time, under the assumptions of Proposition \ref{thm:fisher_diag} the diagonal part of the rotated matrix $\tilde{F_t}$ can be written in the form
\begin{equation*}
    \diag(\tilde{F_t}) = \Lambda_L \otimes \Lambda_R + \diag(\tilde{E_t}),
\end{equation*}
which, in the case of small $\|\tilde{E_t}\|$, can be well approximated by Kronecker product of two diagonal matrices.
Hence, matricization of the diagonal of rotated Fisher matrix $\tilde{F_t}$ can be approximated via rank-1 matrix under assumptions in Proposition \ref{thm:fisher_diag}.
This allows us to apply the same projector-splitting approach to approximation of the Adam's second moment
, additionally decreasing optimizer's memory consumption.

\subsection{DyKAF Optimizer}

Now we are ready to formulate the main algorithm of this work. Algorithm \ref{alg:dykaf} follows the SOAP framework and can be viewed as its modification that uses projector-splitting to keep an accurate dynamical Kronecker approximation, while remaining memory- and time-efficient (see Appendix \ref{sec:perf_eval}). It  integrates cleanly into existing SOAP code, and, as shown in experiments, working well with the same hyperparameters as SOAP (see Section \ref{sub:pt}).

Regarding the second moment maintenance, DyKAF supports two modes controlled by the flag \texttt{rank1\_second\_moment}. If enabled, the Adam second moment matrix \(V_t \in \mathbb{R}^{m\times n}\) is approximated with product of two vectors $v_l \in \mathbb{R}^{m}$ and $v_r \in \mathbb{R}^{n}$, which are initialized to small constants \(\varepsilon \mathbf{1}\). Otherwise, the standard AdamW is used with initialization \(V_0 = 0\). We recommend setting \texttt{rank1\_second\_moment = True} for fine-tuning and \texttt{False} for pretraining (see Appendix \ref{app:ablations_llama}). The momentum term \(M_0\) is initialized to zero.

\begin{algorithm}[H]
	\begin{algorithmic}[1]
		\caption{\textbf{DyKAF} optimizer's 
        step $t$ for a $m \times n$ layer $W_t$.
        \textit{Hyperparameters}: learning rate~$\eta$, betas $(\beta_1, \beta_2)$, epsilon $\varepsilon$, preconditioning frequency~$f$ and Adam rank-1 flag \texttt{rank1\_second\_moment}.}
        \label{alg:dykaf}

\State Sample batch $B_t$
\State $G_t \in \mathbb{R}^{m \times n} \gets -\nabla_W \mathcal{L}_{B_t}(W_t)$.
\State $G_t' \gets Q_L^{\top} G_t Q_R$
\State $M_t \gets \beta_1 M_t + (1-\beta_1) G_t$
\State $M'_t \gets Q_L^{\top} M_t Q_R$
\If{\texttt{rank1\_second\_moment}}
\comment{Apply alg.~\ref{alg:proj_split} to update rank-1 Adam second moment approximation and use in step 10}{\algorithmicindent}
\State $v_l v_r^{\top} \gets \texttt{proj\_split}(v_{l}v_{r}^{\top}, G'_t \odot G'_t)$
\State $N'_t \gets M'_t \oslash \left(\left(v_lv_r^{\top}\right)^{\circ \frac{1}{2}}+\epsilon \right)$
\Else
\State $V_t \gets \beta_2 V_t + (1-\beta_2) (G'_t \odot G'_t)$
\State $N'_t \gets M'_t \oslash \left(V_t^{\circ \frac{1}{2}}+\epsilon \right)$
\EndIf
\State $N_t \gets Q_L N'_t Q_R^{\top}$ 
\State $W_t \gets W_t - \eta N_t$
\comment{Apply alg.~\ref{alg:kron_proj_split} to update $L$ and $R$}{0em}
\State $L, R \gets \texttt{kron\_proj\_split}(\sqrt{\beta_1}L, \sqrt{\beta_1}R, \sqrt{1-\beta_1} G_t)$

\If{$t ~\%~ f = 0$}
\comment{Apply alg.~\ref{alg:eigenvectors} to update $Q_L$ and $Q_R$.}{\algorithmicindent}
\State $Q_L \gets \texttt{Eigenvectors}(L, Q_L)$
\State $Q_R \gets \texttt{Eigenvectors}(R, Q_R)$
\EndIf
  \end{algorithmic}
\end{algorithm}

\begin{algorithm}
	\begin{algorithmic}[1]
		\State $S \gets PQ$
            \State $Q \gets \texttt{QR}(S)$
	\end{algorithmic}
	\caption{\texttt{Eigenvectors} function from \citep{vyas2024soap}. }
	\label{alg:eigenvectors}
\end{algorithm}

\section{Experiments}
We organize the experimental evaluation into two stages.

First we focus on fine-tuning benchmarks (Section \ref{sub:ft}) and  after we move to pretraining tasks (Section \ref{sub:pt}). We compare DyKAF (Algorithm \ref{alg:dykaf}) against AdamW \citep{kingma2014adam}, Muon~\citep{jordan2024muon}, and SOAP \citep{vyas2024soap}.
These optimizers constitute a representative set of state-of-the-art methods, combining strong empirical performance with theoretical appeal, and thus form a compelling baseline suite for comparison. 
\subsection{Fine-tuning}
\label{sub:ft}
We evaluate DyKAF (Algorithm \ref{alg:dykaf}) in two fine-tuning scenarios covering both medium-scale and large-scale models.
In all runs, only the self-attention matrices 
are updated while the rest of the model remains frozen.
For DyKAF optimizer we set \texttt{rank1\_second\_moment} parameter to \texttt{True}, which is motivated by the empirical observation that the gradients in fine-tuning exhibit low-rank structure (see ablation study in Appendix \ref{app:ablations_llama}).
Other hyperparameter details are provided in Appendix~\ref{app:ft}.

\paragraph{Sentence-level language understanding.} 
Our first setup considers \texttt{Roberta-Large} model \citep{he2021debertav3} (185M parameters) on the GLUE benchmark \citep{wang2018glue}. We adopt full fine-tuning of the self-attention matrices, i.e., without any parameter-efficient adapters. This represents a controlled medium-scale setting where optimization dynamics can be clearly compared. The results are reported in Table~\ref{tab:glue_ft}.

\begin{table}[ht]
\centering
\caption{Results on the GLUE benchmark with full fine-tuning of \texttt{Roberta-Large} (185M). Only the self-attention matrices are updated, while all other parameters remain frozen. The best result per column is shown in bold and the second-best is underlined.}
\label{tab:glue_ft}
\renewcommand{\arraystretch}{1.1}
\resizebox{\linewidth}{!}{
\begin{tabular}{l|ccccccc|c}
\toprule
\multirow{2}*{\bf Method} & {\bf MNLI} & {\bf SST-2} & {\bf CoLA} & {\bf QQP } & {\bf QNLI} & {\bf RTE}  & {\bf MRPC}  & {\bf ALL} \\
~  & {Acc} & {Acc} & {Mcc} & {Acc} & {Acc} & {Acc} & {Acc} & {Avg } \\
\midrule
\texttt{AdamW} &
\textbf{90.13} & 95.87 & \underline{68.66} & \textbf{92.24} & \textbf{94.00} & \underline{83.03} & \underline{89.95} & \underline{87.70} \\
\texttt{Muon} &
89.94 & \underline{95.18} & \textbf{68.73} & 91.88 & 93.32 & 80.51 & \underline{89.95} & 87.07 \\
\texttt{SOAP} &
\underline{90.03} & 94.84 & 67.53 & 91.86 & 93.30 & 81.23 & 88.73 & 86.79 \\
\texttt{DyKAF} &
90.01 & \textbf{96.45} & 68.44 & \underline{92.16} & \underline{93.54} & \textbf{84.48} & \textbf{90.93} & \textbf{88.00} \\
\bottomrule
\end{tabular}
}
\end{table}

\paragraph{LLM downstream adaptation.} 
To test transferability in large-scale settings, we further evaluate \texttt{Qwen3-8B} model \citep{yang2025qwen3} across classical LLM benchmarks: MathQA \citep{yu2023metamath}, GSM8K \citep{cobbe2021training}, HellaSwag \citep{zellers2019hellaswag}, BoolQ \citep{clark2019boolq} and ARC-Challenge \citep{clark2018think}. Unlike the GLUE case, here we employ LoRA adapters \citep{hu2021lora} for practical efficiency, reflecting a more realistic adaptation pipeline for modern LLMs. This benchmark emphasizes sample efficiency and robustness to heterogeneous reasoning and comprehension tasks. The corresponding results are presented in Table~\ref{tab:qwen_ft}.

\begin{table}[h!]
\centering
\caption{Results on LLM fine-tuning with \texttt{Qwen3-8B} using LoRA on self-attention matrices. We evaluate different optimizers across classical LLM benchmarks. The best result per column is shown in bold and the second-best is underlined.}
\label{tab:qwen_ft}
\renewcommand{\arraystretch}{1.1}
\resizebox{\linewidth}{!}{
\begin{tabular}{l|ccccc|c}
\toprule
\multirow{2}*{\bf Method} 
& {\bf MathQA} & {\bf GSM8K} & {\bf HellaSwag} & {\bf BoolQ} & {\bf ARC} & {\bf ALL} \\
~ & {Acc} & {Acc} & {Acc} & {Acc} & {Acc} & {Avg} \\
\midrule
\texttt{AdamW} &
\underline{43} & \underline{40} & \underline{69} & 63 & 85 & 60.0 \\
\texttt{Muon} &
\underline{43} & \textbf{42} & \underline{69} & \underline{64} & 85 & \underline{60.6} \\
\texttt{SOAP} &
41 & \textbf{42} & 68 & 63 & \textbf{89} & \underline{60.6} \\
\texttt{DyKAF} &
\textbf{47} & 39 & \textbf{70} & \textbf{67} & \underline{86} & \textbf{61.8} \\
\bottomrule
\end{tabular}
}
\end{table}

\paragraph{Fine-tuning experiment results.} 
Across both setups DyKAF achieves the strongest overall performance. On the GLUE benchmark with full fine-tuning of \texttt{Roberta-Large}, DyKAF attains the highest average score, consistently matching or surpassing baselines on most tasks. On the large-scale LLM experiments with \texttt{Qwen3-8B}, DyKAF again achieves the best average result, with particularly strong gains on reasoning-oriented datasets such as MathQA and BoolQ. These findings demonstrate that DyKAF provides competitive advantages in both medium-scale and large-scale fine-tuning regimes, outperforming widely used optimizers including AdamW, Muon, and SOAP.

\subsection{Pretraining}
\label{sub:pt}
We evaluate DyKAF (Algorithm~\ref{alg:dykaf}) in two pretraining scenarios, spanning small- to medium-scale (NanoGPT-like) and larger-scale (LLaMA-124M on FineWeb).
In all fine-tuning experiments with  DyKAF optimizer we set \texttt{rank1\_second\_moment = False} in Algorithm \ref{alg:dykaf}, supported by an ablation study (Appendix~\ref{app:ablations_llama}), which shows that, unlike in fine-tuning, pretraining benefits from retaining a full-rank second-moment estimate. Other hyperparameter details are provided in Appendix~\ref{app:pt}.

\paragraph{Character-level language modeling.} 
We first evaluate small-scale \texttt{GPT-base} model using Shakespeare-char dataset ($\sim 160$M tokens) \citep{caldas2018leaf} following the NanoGPT setup \citep{Karpathy2022}. We consider models of varying sizes ($0.41$M, $1.55$M, $3.24$M parameters) and report minimum validation loss for each scale. Results are presented in Table~\ref{tab:pretrain_gptbase}.

\begin{table}[h!]
\centering
\caption{Pretraining on Shakespeare-char with \texttt{GPT-base} models of varying sizes. We report minimum validation loss for different model sizes. The best result per column is shown in bold and the second-best is underlined.}
\label{tab:pretrain_gptbase}
\renewcommand{\arraystretch}{1.1}
\resizebox{\linewidth}{!}{
\begin{tabular}{l|ccc|c}
\toprule
\multirow{2}*{\bf Method} & {\bf 0.41M} & {\bf 1.55M} & {\bf 3.24M} & {\bf ALL} \\
~ & {min val loss} & {min val loss} & {min val loss} & {Avg} \\
\midrule
\texttt{AdamW} & \underline{1.5923} & 1.5893 & 1.5614 & 1.5810 \\
\texttt{Muon} & 1.6430 & 1.6218 & 1.6160 & 1.6269 \\
\texttt{SOAP} & 1.5954 & \underline{1.5749} & \underline{1.5474} & \underline{1.5726} \\
\texttt{DyKAF} & \textbf{1.5797} & \textbf{1.5671} & \textbf{1.5346} & \textbf{1.5605} \\
\bottomrule
\end{tabular}
}
\end{table}

\paragraph{Large-scale web corpus pretraining.}
We next follow the large-scale setup introduced in \citep{semenov2025benchmarking}, pretraining \texttt{LLaMA-124M} \citep{dubey2024llama} on the FineWeb dataset ($\sim$100B tokens) \citep{penedo2024fineweb}. In all previous experiments, competing optimizers were re-implemented and tuned under identical conditions. For this large-scale setting, however, we directly adopted the reported results of AdamW, Muon, and SOAP from \citep{semenov2025benchmarking}. DyKAF was run with the same hyperparameters as SOAP in that benchmark, since pretraining runs at this scale are prohibitively expensive, making a dedicated hyperparameter sweep infeasible. Results are deferred to Table~\ref{tab:pretrain_llama}.

\begin{table}[h!]
\centering
\caption{Pretraining on FineWeb with \texttt{LLaMA-124M}. Reported values are validation losses after different number of processed training tokens. The best result per column is shown in bold and the second-best is underlined.}
\label{tab:pretrain_llama}
\renewcommand{\arraystretch}{1.1}
\resizebox{\linewidth}{!}{
\begin{tabular}{l|cccccc|c}
\toprule
\multirow{2}*{\bf Method} & {\bf 1B} & {\bf 2.1B} & {\bf 4.2B} & {\bf 6.3B } & {\bf 8.4B} & {\bf 16.8B} & {\bf ALL} \\
~ & {val loss} & {val loss} & {val loss} & {val loss} & {val loss} & {val loss} & {Avg} \\
\midrule
\texttt{AdamW} &
3.5098 & 3.3992 & 3.3128 & 3.2752 & 3.2522 & \underline{3.2083} & 3.3263 \\
\texttt{Muon} &
\underline{3.4626} & \underline{3.3662} & \underline{3.2953} & \underline{3.2639} & 3.2445 & 3.2111 & \underline{3.3073} \\
\texttt{SOAP} &
3.4731 & 3.3735 & 3.3006 & \underline{3.2637} & \underline{3.2437} & 3.2149 & 3.3116 \\
\texttt{DyKAF} &
\textbf{3.4553} & \textbf{3.3568} & \textbf{3.2832} & \textbf{3.2464} & \textbf{3.2319} & \textbf{3.1986} & \textbf{3.2954}\\
\bottomrule
\end{tabular}
}
\end{table}
\paragraph{Pretraining experiment results.} 
Pretraining experiments require substantially larger computational resources than fine-tuning, with models trained on billions of tokens and long wall-clock times. Across all settings DyKAF achieves the strongest results, outperforming baseline optimizers. On the small-scale \texttt{GPT-base} pretraining, DyKAF consistently obtains the lowest validation loss across model sizes. In the large-scale \texttt{LLaMA-124M} pretraining, DyKAF again surpasses the baselines. Importantly, in this most resource-intensive experiment we reuse the SOAP hyperparameters without any additional tuning, nevertheless DyKAF remains robust and yields state-of-the-art results, highlighting the stability of the method under challenging large-scale training conditions.

\section{Related Work}
\label{sec:related_work}

K-FAC approximates the layerwise Fisher with a Kronecker product, lowering cost while preserving key curvature \citep{martens2015optimizing}. Extensions to convolutional layers exploit spatial sharing for practical speedups \citep{grosse2016kronecker,tornstad2020evaluating}. Empirically, K-FAC–style preconditioners offer a strong accuracy–efficiency trade-off, motivating low-rank and block variants \citep{ren2021kronecker,koroko2023efficient}.

Shampoo builds on K-FAC by preconditioning with matrix powers, using the -1/4 exponent to shape gradient updates and improve convergence stability \citep{gupta2018shampoo}. A distributed, numerically robust implementation \citep{shi2023distributed} achieves state-of-the-art results and wins AlgoPerf \citep{Dahl2023AlgoPerf,Kasimbeg2025AlgoPerfResults}. Recent theory shows that the square of Shampoo’s approximation equals one step of a power method toward the optimal Kronecker-factored Fisher, clarifying its performance \citep{morwani2024new}.

SOAP builds on Shampoo by applying an AdamW-like step in a Shampoo-preconditioned parameter space and replacing costly matrix roots with fast QR decompositions \citep{vyas2024soap}. In large-scale experiments, SOAP shows faster convergence than AdamW and Shampoo across standard benchmarks, supporting its use as a strong default in practice \citep{semenov2025benchmarking,wen2025fantastic}.

Recent work proposes targeted modifications to the Kronecker-based preconditioners used in SOAP and Shampoo, aiming to stabilize curvature and keep costs modest. Fisher-Orthogonal Projection adds structured projections to stabilize Fisher estimates in large models \citep{lu2025fisher}. Two algorithms refine the Fisher approximation with block structure and low rank while maintaining Kronecker-like update cost \citep{gong2025towards}. KL-aligned preconditioners further improve the performance of second-order steps \citep{lin2025understanding}. Overall, these studies sharpen Kronecker-based approximations with simple mechanisms and practical overheads.

\section{Conclusion}
This paper proposes a dynamical Kronecker approximation of the empirical Fisher matrix within a SOAP-style preconditioner, maintained via a projector-splitting algorithm, enabling stable curvature-aware training with minimal overhead while preserving layerwise structure. Experiments on language understanding, LLM adaptation, and pretraining show improved performance over strong baselines without extensive hyperparameter sweeps.

\bibliography{refs}

\clearpage
\appendix
\thispagestyle{empty}

\onecolumn
\aistatstitle{
Supplementary Materials}

\section{Empirical Studies on Hessian Accuracy}
\label{app:hessian_calc}
\subsection{Hessian of the Softmax Cross-Entropy Loss}
\label{app_sub:hess_calc}

\paragraph{Setup.}
For a single training pair $(x,y)$ with $x\in\mathbb{R}^{n}$ and one-hot  
$y\in\{e_1,\dots,e_m\}$, let
\[
z = W x, \qquad
p = \operatorname{softmax}(z), \qquad
p_k = \frac{\exp(z_k)}{\sum_{j=1}^{m}\exp(z_j)},
\]
where $W\in\mathbb{R}^{m\times n}$ and $z,p\in\mathbb{R}^{m}$.
The per-sample cross-entropy loss is
\[
L(W) \;=\; -\sum_{k=1}^{m} y_k \,\log p_k.
\]

\paragraph{Gradient.}
The softmax Jacobian is
\[
\frac{\partial p_i}{\partial z_j} = p_i(\delta_{ij}-p_j).
\]
Using it together with $\partial(-\log p_k)/\partial p_k=-1/p_k$,
\[
\frac{\partial L}{\partial z_j}
    = \sum_{k=1}^{m} \Bigl(-\frac{y_k}{p_k}\Bigr)
      p_k(\delta_{kj}-p_j)
    = p_j - y_j
    \;\;\Longrightarrow\;\;
    \nabla_{z} L = p - y.
\]
Because $z = W x$ and $\partial z_j/\partial W_{jk}=x_k$,
\[
\nabla_{W} L
    = (p-y)\,x^{\!\top}\in\mathbb{R}^{m\times n}.
\]

\paragraph{Hessian.}
Start from the element-wise gradient  
\[
\frac{\partial L}{\partial W_{ij}}
    =(p_{i}-y_{i})\,x_{j},
\qquad i=1,\dots,m,\; j=1,\dots,n,
\]
and differentiate once more with respect to another entry \(W_{kl}\):
\[
\frac{\partial^{2}L}{\partial W_{ij}\,\partial W_{kl}}
    =x_{j}\;
     \frac{\partial(p_{i}-y_{i})}{\partial W_{kl}}
    =x_{j}\;
     \frac{\partial p_{i}}{\partial W_{kl}}.
\]

Because \(z = W x\), the logit derivatives are
\[
\frac{\partial z_{r}}{\partial W_{kl}}
    =x_{l}\,\delta_{rk}.
\]
Using the soft-max Jacobian
\(\frac{\partial p_{i}}{\partial z_{r}}
      =p_{i}(\delta_{ir}-p_{r})\),
\[
\frac{\partial p_{i}}{\partial W_{kl}}
    =\sum_{r=1}^{m}
      \frac{\partial p_{i}}{\partial z_{r}}\,
      \frac{\partial z_{r}}{\partial W_{kl}}
    =p_{i}\bigl(\delta_{ik}-p_{k}\bigr)\,x_{l}.
\]

Substituting,
\[
\frac{\partial^{2}L}{\partial W_{ij}\,\partial W_{kl}}
      =p_{i}\bigl(\delta_{ik}-p_{k}\bigr)\;
       x_{j}\,x_{l}
\qquad
(i,j,k,l).
\]
Let
\(H^{(z)}=\operatorname{diag}(p)-p\,p^{\!\top}\in\mathbb{R}^{m\times m}\)
and \(S = x\,x^{\!\top}\in\mathbb{R}^{n\times n}\).
Vectorising \(W\) into \(w=\operatorname{vec}(W)\in\mathbb{R}^{mn}\),
the Hessian becomes
\[
\nabla^{2}_{W}L
    =H^{(z)}\otimes S
    \;\in\;\mathbb{R}^{(mn)\times(mn)},
\]
i.e.\ a Kronecker product of the \emph{predictive covariance}
\(H^{(z)}\) and the \emph{input covariance} \(S\).

\paragraph{Dataset loss.}
For a dataset $\{(x_i,y_i)\}_{i=1}^{N}$ define the empirical loss
\[
L(W) = -\frac{1}{N}\sum_{i=1}^{N}\sum_{k=1}^{m} y_{ik}\,\log p_{ik},
\quad p_i = \operatorname{softmax}(W x_i).
\]
Its first and second derivatives are the averages
\[
\nabla_{W} L
    = \frac{1}{N}\sum_{i=1}^{N} (p_i - y_i)\,x_i^{\!\top},
\qquad
\nabla^{2}_{W} L
    = \frac{1}{N}\sum_{i=1}^{N}
      \Bigl(\operatorname{diag}(p_i)-p_i p_i^{\!\top}\Bigr)
      \otimes
      \bigl(x_i x_i^{\!\top}\bigr).
\]
Hence the full-batch Hessian remains a sum of Kronecker products of
input covariances $x_i x_i^{\!\top}$ and predictive covariances
$\operatorname{diag}(p_i)-p_i p_i^{\!\top}$.

\subsection{Experiment Details}
We examine the accuracy of Hessian approximation on a controlled setup: multiclass classification of the \texttt{mushrooms} dataset \citep{chang2011libsvm} with an MLP and softmax output.
In this case the exact Hessian can be computed analytically (see Appendix~\ref{app_sub:hess_calc}), enabling a direct comparison of the Fisher matrix estimates from SOAP and DyKAF with real Hessian matrix.
We evaluate the difference for varying numbers of available training samples, with optimizer hyperparameters tuned via Optuna package \citep{akiba2019optuna}. 
For both SOAP and DyKAF, we build the full Fisher matrix approximation using the following formula:
\begin{equation*}
	\widetilde{F}_{\mathrm{method}} = (Q_L \otimes Q_R)\mathrm{\diag}(\mathrm{vec}(V_t))(Q_L \otimes Q_R)^{\top},
\end{equation*}
where $Q_L, Q_R$ and $V_t$ are taken from \Algref{alg:dykaf}.
Motivated by \citep{martens2015optimizing}, we use $\widetilde{F}_{method}$ as Hessian approximation and report the Frobenius norm of the difference between the real Hessian and its estimates $\|H_{real}-\tilde F_{\text{method}}\|$. 
Figure~\ref{fig:hess_diff} shows that DyKAF consistently yields a more accurate Hessian approximation than SOAP across various settings.

\section{Proof of Proposition \ref{thm:thm_dynamical}}
\label{apx:thm_dynamical}

\begin{proof}
Let for the sake of brevity $\tilde A=\mathcal{R}(A)$ in this section. 
Applying rearrangement operator to $F_0$ and $F_1$, we obtain
\[\tilde F_0=\mathcal{R}(F_0)=\mathcal{R}(L_{0}\otimes R_0+E_0)=s_0\ell_0r_0^{\top}+\tilde E_0, \]
\[\tilde F_1 =\mathcal{R}(F_1)=\mathcal{R}(F_0+\Delta F)=\mathcal{R}(A_1\otimes B_1+E_1)=sab^{\top}+\tilde E_1,\] 
where $\Delta F=F_1-F_0$; the vectors $\ell_0 = \mathrm{vec}(L_0)/\|L_0\|, r_0=\mathrm{vec}(R_0)/\|R_0\|, a=\mathrm{vec}(A_1)/\|A_1\|, b=\mathrm{vec}(B_1)/\|B_1\|$ have norm 1; the values $s_0=\|A_0\otimes B_0\|=\|L_0\otimes R_0\|, s=\|A_1\otimes B_1\|$ are positive scalars. Let us denote $\tilde E=\tilde E_1-\tilde E_0$.

We want to apply  projector-splitting to approximate $\mathcal{R}(F_1)=\mathcal{R}(F_0)+\mathcal{R}(\Delta F)$, where the current rank-1 approximation to $\mathcal{R}(F_0)$ is $s_0 \ell_0 r_0^T$. The target dynamical low-rank approximation to $\tilde F_1$ is $\ell_1 s_1 r_1^T$, where $s_1=\ell_1^T(s_0\ell_0r_0^T+\mathcal{R}(\Delta F))r_1=\ell_1^T(\tilde F_1-\tilde E_0)r_1$ by formula from Algorithm \ref{alg:proj_split}. Now we aim to sequentially obtain the necessary formula, separately deriving all the projector-splitting components.

Using Algorithm \ref{alg:proj_split}, we obtain:
\[\ell_1=\frac{\ell_0s_0+\mathcal{R}(\Delta F)r_0}{\|\ell_0s_0+\mathcal{R}(\Delta F)r_0\|}=\frac{(\ell_0r_0^{\top}s_0+\mathcal{R}(\Delta F))r_0}{\|(\ell_0r_0^{\top}s_0+\mathcal{R}(\Delta F))r_0\|}=\frac{(\tilde F_1-\tilde E_0)r_0}{\|(\tilde F_1-\tilde E_0)r_0\|}=\frac{(sab^\top+\tilde E)r_0}{\|(sab^\top+\tilde E)r_0\|}.\]

Let us denote $x=r_0^\top b$ and $y=r_0^\top\tilde E^\top a$.
\[d=\|(sab^\top+\tilde E)r_0\|^2
=r_0^\top(sba^\top+\tilde E^\top)(sab^\top+\tilde E)r_0
=s^2(r_0^\top b)^2+2s(r_0^\top b)(r_0^\top\tilde E^\top a)+O(\varepsilon^2)=
s^2x^2+2sxy+O(\varepsilon)^2.\]
\[d^{-1}=\frac{1}{s^2x^2}(1-2\frac{y}{sx}+O(\varepsilon^2))=\frac{1}{s^2x^2}-2\frac{y}{s^3x^3}+O(\varepsilon^2).\]

Let us substitute the this expression into the formula for $\ell_1\ell_1^{\top}(\tilde F_1-\tilde E_0)$:
\[\begin{aligned}&\ell_1\ell_1^{\top}(\tilde F_1-\tilde E_0)=\frac{(ab^{\top}s+\tilde E)r_0r_0^{\top}(ba^{\top}s+\tilde E^{\top})(ab^{\top}s+\tilde E)}{\|(ab^{\top}s+\tilde E)r_0\|^2}=
\\&=\left(ab^\top s^3(b^\top r_0)^2+s^2(r_0^\top b)(\tilde E r_0b^\top)+s^2(b^\top r_0)(ar_0^\top\tilde E^\top ab^\top)+s^2(r_0b^\top)^2(aa^\top \tilde E)+O(\varepsilon^2)\right)d^{-1}=
\\& =\left(ab^\top s^3x^2+s^2x(\tilde E r_0b^\top)+s^2x(ar_0^\top\tilde E^\top ab^\top)+s^2x^2(aa^\top \tilde E)+O(\varepsilon^2)\right)\left(\frac{1}{s^2x^2}-2\frac{(r_0^\top\tilde E^\top a)}{s^3x^3}+O(\varepsilon^2)\right)=
\\& =ab^\top s -\frac{2ab^Ty}{x}+\frac{\tilde Er_0b^\top}{x}+\frac{ab^\top y}{x}+aa^\top \tilde E +O(\varepsilon^2)\end{aligned}.\]

The same way, \[r_1=\frac{(sab^\top+\tilde E)^{\top}\ell_0}{\|(sab^\top+\tilde E)^{\top}\ell_0\|}.\] 
Let us denote $z=\ell_0^\top a$ and $p=b^\top \tilde E^\top \ell_0$.
\begin{equation*}
\|(sab^\top+\tilde E)^{\top}\ell_0\|^{-2}=\frac{1}{s^2z^2}-2\frac{p}{s^3z^3}+O(\varepsilon^2).
\end{equation*}
Now, we are able to rewrite $r_1r_1^{\top}$ and fully assemble $s_1l_1r_1^{\top}$.
\begin{equation*}
\begin{gathered}
r_1r_1^\top=\left((sba^\top+\tilde E^{\top})\ell_0\ell_0^\top(sab^\top+\tilde E)\right)\left(\frac{1}{s^2z^2}-2\frac{p}{s^3z^3}+O(\varepsilon^2)\right)=
\\=\left(s^2z^2bb^\top+szb\ell_0^\top\tilde E+sz\tilde E^\top\ell_0 b^\top+O(\varepsilon^2)\right)\left(\frac{1}{s^2z^2}-2\frac{p}{s^3z^3}+O(\varepsilon^2)\right)=
\\=bb^\top-2\frac{pbb^T}{sz}+\frac{b\ell_0^\top\tilde E}{sz}+\frac{E^\top\ell_0 b^\top}{sz}+O(\varepsilon^2)
\end{gathered}
\end{equation*}

Finally, we estimate $\ell_1 s_1 r_1^T$:
\[\begin{aligned}&\ell_1s_1r_1^T=\ell_1\ell_1^T(\tilde F_1-\tilde E_0)r_1r_1^\top=\\&=\left(ab^\top s -\frac{2ab^Ty}{x}+\frac{\tilde Er_0b^\top}{x}+\frac{ab^\top y}{x}+aa^\top \tilde E +O(\varepsilon^2)\right)\left(bb^\top-2\frac{pbb^T}{sz}+\frac{b\ell_0^\top\tilde E}{sz}+\frac{E^\top\ell_0 b^\top}{sz}+O(\varepsilon^2)\right)=
\\&=ab^\top s-\frac{2pab^\top}{z}+\frac{a\ell_0^\top\tilde E}{z}+\frac{pab^\top}{z}-\frac{2yab^\top}{x} +\frac{\tilde Er_0b^\top}{x}+\frac{yab^\top}{x}+aa^\top \tilde Ebb^\top +O(\varepsilon^2)=
\\&=ab^\top s+\frac{1}{\ell_0^\top a}\left(-(b^\top\tilde E^\top \ell_0)ab^\top+a\ell_0^\top\tilde E\right)+\frac{1}{r_0^\top b}\left(-(r_0^\top \tilde E^\top  a)ab^\top +\tilde Er_0b^\top\right)+aa^\top \tilde Ebb^\top +O(\varepsilon^2)=
\\& = ab^\top + \widehat E_1 +O(\varepsilon^2).\end{aligned}\]

Let us estimate the norms of the summands constituting $\tilde E_1$, which contain the first-order error $\tilde E$.
\[\|aa^\top \tilde Ebb^\top\|\leq \|aa^\top \|\|\tilde E\|\|bb^\top\|=\|\tilde E\|.\]
For any matrices $C$ and $D$ of correct sizes, $\|CD\|_F\leq \|C\|_F\|D\|_2$, where $\|\cdot\|_F$ is Frobenius norm (which we have always denoted as $\|\cdot \|$) and $\|\cdot\|_2$ is the spectral norm. The spectral norm of orthoprojector equals 1: $\left\|I-bb^\top\right\|_2=1$. Therefore, 
\begin{equation*}
\left\|\frac{1}{\ell_0^\top a}\left(-(b^\top\tilde E^\top \ell_0)ab^\top+a\ell_0^\top\tilde E\right)\right\|_F = \frac{\left\|a\ell_0^\top\tilde E\left(I-bb^\top\right)\right\|_F}{|\ell_0^Ta|} \leq \frac{\left\|a\ell_0^\top\right\|_F\left\|\tilde E\right\|_F\left\|I-bb^\top\right\|_2}{|\ell_0^Ta|} = \frac{\left\|\tilde E\right\|_F}{|\ell_0^Ta|}.
\end{equation*}

Analogously, 
\begin{equation*}
\left\|\frac{1}{r_0^\top b}\left(-(r_0^\top \tilde E^\top  a)ab^\top +\tilde Er_0b^\top\right)\right\|\leq \frac{\left\|\tilde E\right\|}{|r_0^Tb|}.
\end{equation*}

Therefore,
\[\begin{aligned}&\ell_1s_1r_1^T= ab^\top s +  \widehat E_{1} +O(\varepsilon^2),\end{aligned}\]
where $\|\widehat E_{1}\|\leq \left(1+\frac{1}{|\ell_0^Ta|}+\frac{1}{|r_0^Tb|}\right)\|\tilde E\|$.

Let us remember that 
\begin{equation*}
    |\ell_0^Ta|=\frac{|\mathrm{tr}(A_0^TA_1)|}{\|A_0\|\|A_1\|}
\end{equation*} and 
\begin{equation*}
    |r_0^Tb|=\frac{|\mathrm{tr}(B_0^TB_1)|}{\|B_0\|\|B_1\|}.
\end{equation*}
We know from Cauchy-Schwarz inequality that
\begin{equation*}
    \frac{|\mathrm{tr}(A_0^TA_1)|}{\|A_0\|\|A_1\|}\leq 1.
\end{equation*} 
Using the condition from the proposition, we get 
\begin{equation*}
\frac{|\mathrm{tr}((A_0\otimes B_0)^\top (A_1\otimes B_1))|}{\|A_0\otimes B_0\|\|A_1\otimes B_1\|}\geq c>0
\quad \Rightarrow \quad \frac{|\mathrm{tr}(B_0^TB_1)|}{\|B_0\|\|B_1\|} \geq \frac{|\mathrm{tr}(A_0^TA_1)||\mathrm{tr}(B_0^TB_1)|}{\|A_0\|\|A_1\|\|B_0\|\|B_1\|} \geq c.
\end{equation*}
Analogously, 
\begin{equation*}
    |\ell_0^Ta|=\frac{|\mathrm{tr}(A_0^TA_1)|}{\|A_0\|\|A_1\|}\geq c.
\end{equation*} 
Therefore, 
\begin{equation*}\begin{aligned}&\|\widehat E_{1}\|\leq 
\left(1+\frac{1}{|\ell_0^Ta|}+\frac{1}{|r_0^Tb|}\right)\|\tilde E\|= 
\left(1+\frac{1}{|\ell_0^Ta|}+\frac{1}{|r_0^Tb|}\right)\|E\|\leq
\left(1+\frac{2}{c}\right) \| E\|\leq \left(1+\frac{2}{c}\right) (\| E_1\|+\|E_0\|). \end{aligned}\end{equation*}
\end{proof}

\section{Proof of Proposition \ref{thm:thm_coherence}}
\label{apx:thm_coherence}
\begin{proof}
Let $g_i=\mathrm{vec}(G_i)$. 

Then it holds that
\[\left\|\left(\sum_i^tG_iG_i^{\top}\right)^{1/2}\right\|^2=\mathrm{tr}\left(\sum_i^tG_iG_i^{\top}\right)=\sum_i^t\|G_i\|^2.\]
In the same way,
\[\left\|\left(\sum_i^tG_i^{\top}G_i\right)^{1/2}\right\|^2=\sum_i^t\|G_i\|^2.\]
Then we have
\[ \left\|\left(\sum_i^tG_iG_i^{\top}\right)^{1/2}\otimes\left(\sum_i^tG_i^{\top}G_i\right)^{1/2}\right\|^2=\left\|\left(\sum_i^tG_iG_i^{\top}\right)^{1/2}\right\| ^2\left\|\left(\sum_i^tG_i^{\top}G_i\right)^{1/2}\right\|^2=\left(\sum_i^t\|G_i\|^2\right)^2.\]

Let $\mu$ denote the mutual the coherence of gradients $\mu=\max_{i\not=j}|g_i^{\top}g_j|/{\|g_i\|\|g_j\|}$. Then
\[\left\|\sum_i^tg_ig_i^{\top}\right\|^2=\sum_{i}^t\sum_{j}^t(g_i^{\top}g_j)^2=\sum_i^t\|g_i\|^4+\sum_{i}^t\sum_{j\not=i}^{t}(g_i^{\top}g_j)^2\leq \sum_i^t \|G_i\|^4+\mu^2 \sum_{i}^t\sum_{j\not=i}^t\|G_i\|^2\|G_j\|^2.\]
Finally, 
\[\left\|\left(\sum_i^tG_iG_i^{\top}\right)^{1/2}\otimes\left(\sum_i^tG_i^{\top}G_i\right)^{1/2}\right\|^2-\left\|\sum_i^tg_ig_i^{\top}\right\|^2\geq (1-\mu^2) \sum_{i}^t\sum_{j\not=i}^t\|G_i\|^2\|G_j\|^2.\]
\end{proof}
\section{Proof of Proposition \ref{lemma:init}} \label{apx:proof_init}
\begin{proof}
	Let $G=U\Sigma V^{\top}$ be SVD of $G$. Applying rearrangement, we have
	\[\|\mathrm{vec}(G)\mathrm{vec}(G)^{\top} - L\otimes R\|=\|G\otimes G - \mathrm{vec}(L) \mathrm{vec}(R)^{\top}\|=\|(U\otimes U)(\Sigma \otimes \Sigma) (V\otimes V)^{\top}- \mathrm{vec}(L) \mathrm{vec}(R)^{\top}\|\]
	The largest singular value of $G\otimes G$ is $\sigma_1^2$, the corresponding eigenvectors are $u_1 \otimes u_1$ and $v_1 \otimes v_1$. Then the best rank-1 approximation of $G\otimes G$ is
    \begin{equation*}
        \mathrm{vec}(L)\mathrm{vec}(R)^{\top}= \sigma_1^2 (u_1 \otimes u_1)(v_1 \otimes v_1)^{\top}.
    \end{equation*}
    If we reshape, $L\otimes R = \sigma_1^2 (u_1u_1^{\top}) \otimes (v_1v_1^{\top})$. So $L=\sigma_1u_1u_1^{\top}, R=\sigma_1v_1v_1^{\top}$ is the solution of the minimization problem.
\end{proof}
\section{Proof of Proposition \ref{thm:fisher_diag}}
\label{app:zlp}
Let $\tilde{F} = (Q_A \otimes Q_B)^{\top}F(Q_A \otimes Q_B)$.
	Similarly, define $\tilde{E} = (Q_A \otimes Q_B)^{\top}E(Q_A \otimes Q_B)$. Because of the unitary invariance of the Frobenius norm we have
	\begin{equation*}
		\left\|{\mathrm{diag}(F)}\right\|_{{F}}^2 + \left\|{\mathrm{off}(F)}\right\|_{{F}}^2 = \left\|{\mathrm{diag}(\tilde{F})}\right\|_{{F}}^2 + \left\|{\mathrm{off}(\tilde{F})}\right\|_{{F}}^2.
	\end{equation*}
    Swapping parts of the equation, we finally obtain
	
	\begin{equation*}
		\left\|{\mathrm{diag}(\tilde{F})}\right\|_{{F}}^2 - \left\|{\mathrm{diag}(F)}\right\|_{{F}}^2 = \left\|{\mathrm{off}(F)}\right\|_{{F}}^2 - \left\|{\mathrm{off}(\tilde{F})}\right\|_{{F}}^2 \geqslant 0.
	\end{equation*}
\newpage
\section{Ablation Studies}
\label{sec:ablation}
\subsection{Performance evaluation}
\label{sec:perf_eval}
In this section we report the asymptotics of projector-splitting algorithms we use and describe the theoretical overhead of out method.
Our approach to dynamically update $L_t, R_t$ by given $G_t$, described in \Algref{alg:kron_proj_split} requires 6 matrix-matrix multiplications, 5 scalar product operations (e.g matrix normalization or Frobenius scalar product of two matrices) in comparison with 2 matrix multiplications in the case of Shampoo optimizer. 
At the first glance it may seem that we need more normalizations in \Algref{alg:kron_proj_split}, but we can omit normalization of $L_t$ and $R_t$ at first two steps and make normalization only at steps 3 and 4 of the algorithm. 
In the case when we aim to preserve rank-1 approximation of the second Adam moment (\texttt{rank1\_second\_moment = True}), our computational overhead equals to 4 matrix-vector multiplications and 5 vector operations (e.g normalization or scalar product).
In practice such an increase in the number of operations does not play a vital role in the overall wall time, but increases it up to 20\%.
Additionally, we measure the wall time of the fine-tuning of Qwen3-8B on LLM fine-tuning benchmarks (see Table \ref{tab:ft_runtime}).

\begin{table}[h!]
\centering
\caption{Wall-clock runtime (minutes) for Qwen3-8B fine-tuning across standard LLM benchmarks. All runs use identical hardware, dataloader, and evaluation protocols.}
\label{tab:ft_runtime}
\renewcommand{\arraystretch}{1.1}
\begin{tabular}{l|ccccc|c}
\toprule
\bf Method & \bf MathQA & \bf GSM8K & \bf HellaSwag & \bf BoolQ & \bf ARC & \bf AVG \\
\midrule
SOAP & 83 & 84 & 87 & 85 & 86 & 85 \\
DyKAF & 106 & 107 & 109 & 106 & 107 & 107 \\
\bottomrule
\end{tabular}
\end{table}

\subsection{Ablation on \texttt{rank1\_second\_moment}}
\label{app:ablations_llama}

We ablate the effect of enabling the \texttt{rank1\_second\_moment} flag in Algorithm \ref{alg:dykaf}, corresponding to a low-rank approximation $V \approx V_L V_R^\top$ of the Adam second-moment matrix $V$. This approximation reduces memory overhead and, as shown below, may also yield quality improvements depending on the regime. Hyperparameters follow the same tuning protocol as in the main experiments (see Appendix~\ref{app:ft} and Appendix~\ref{app:pt}).  

\paragraph{Fine-tuning.} 
Table~\ref{tab:ablation_ft_rank1} reports results on LLM downstream datasets. The rank-1 approximation generally performs on par with the full-rank update and in some cases yields small improvements, while in others it is slightly worse. This confirms that in fine-tuning regimes—where gradients are often low-rank—the approximation provides memory savings and can be competitive in quality, similar to how parameter-efficient methods such as LoRA may outperform or match full fine-tuning \citep{veprikov2025weightlora}.  

\begin{table}[h!]
\centering
\caption{Ablation study of the rank-1 second moment on LLM fine-tuning benchmarks (Qwen3-8B). Best values are bold.}
\label{tab:ablation_ft_rank1}
\renewcommand{\arraystretch}{1.1}
\begin{tabular}{l|ccccc|c}
\toprule
\bf Method & \bf MathQA & \bf GSM8K & \bf HellaSwag & \bf BoolQ & \bf ARC & \bf ALL \\
\midrule
DyKAF (\texttt{rank1\_second\_moment=True})  & \textbf{47} & \textbf{39} & \textbf{70} & 67 & 85 & \textbf{61.6} \\
DyKAF (\texttt{rank1\_second\_moment=False}) & 45 & 35 & \textbf{70} & \textbf{70} & \textbf{87} & 61.4 \\
\bottomrule
\end{tabular}
\end{table}

\paragraph{Pretraining.} 
In contrast, Figure~\ref{fig:llama_ablation} shows that for LLaMA-124M pretraining on FineWeb, the rank-1 approximation increases validation loss across training, confirming that pretraining benefits from a full-rank second moment. 

\begin{figure}[h!]
    \centering
    \includegraphics[width=0.7\linewidth]{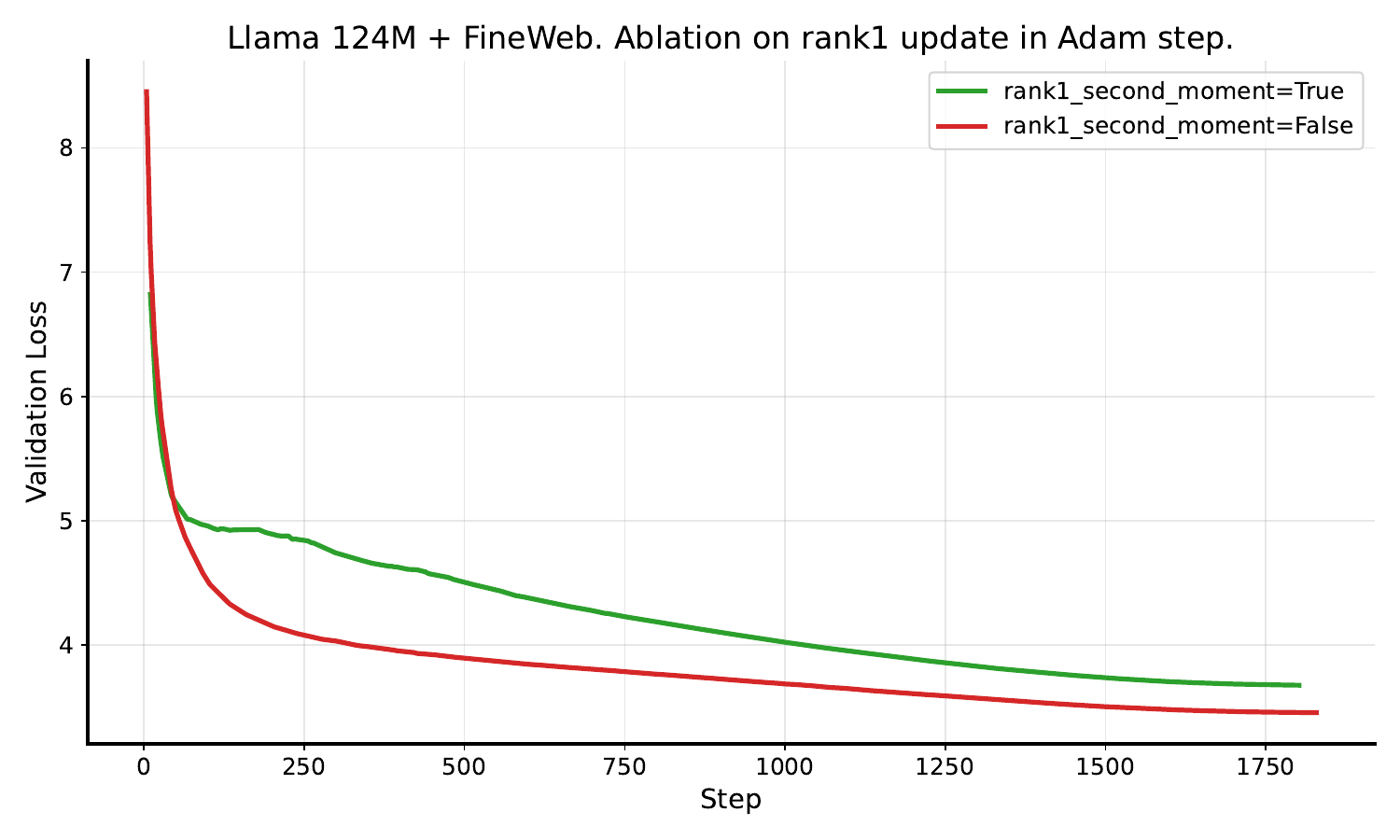}
    \caption{Ablation on \texttt{rank1\_second\_moment} during pretraining of LLaMA-124M on FineWeb. Unlike in fine-tuning, the rank-1 approximation consistently degrades performance, supporting the use of a full-rank second moment in pretraining.}
    \label{fig:llama_ablation}
\end{figure}

\paragraph{Summary.} 
Overall, the ablation indicates that the \texttt{rank1\_second\_moment} flag is beneficial in fine-tuning scenarios, where gradients tend to be low-rank and the approximation can reduce memory cost while sometimes improving generalization. In contrast, for large-scale pretraining, where gradient structure is richer, the full-rank second moment consistently provides better or more stable performance. We therefore recommend enabling the rank-1 approximation in fine-tuning tasks, but disabling it in pretraining.
\newpage
\section{Experimental Setups and Hyperparameters}
\label{app:exp}
\subsection{Fine-tuning Experiment (Section \ref{sub:ft})}
\label{app:ft}
\begin{table}[H]
\centering
\caption{GLUE full fine-tuning: training setup and hyperparameter sweep.}
\label{tab:glue_hparams}
\renewcommand{\arraystretch}{1.2}
\begin{tabular}{l l}
\toprule
\textbf{Parameter} & \textbf{Value} \\
\midrule
Hardware & RTX-2080 GPU 11GB \\
Backbone & \texttt{Roberta-Large} (185M) \\
Trainable params & Self-attention matrices (\texttt{query\_proj}, \texttt{key\_proj}, \texttt{value\_proj}) \\
Fine-tuning & Full FT (no adapters) \\
Batch size & 16 \\
Grad. accumulation & 2 \\
DType & bfloat16 \\
Num train epochs & $3$ for MNLI, SST-2, QQP, QNLI ; $10$ for CoLA, RTE, MRPC \\
Max seq length & $256$ \\
Weight decay & $1\!\times\!10^{-6}$ \\
Adam parameters & $\beta_1 = 0.9,, \beta_2 = 0.999, \varepsilon = 10^{-8}$ \\
LR scheduler & linear, warmup ratio $0.1$ \\
Sweep LRs & $\{10^{-5},\,10^{-4},\,5\!\times\!10^{-4},\,10^{-3},\,5\!\times\!10^{-3},\,10^{-2}\}$ \\
Update frequency (for SOAP and DyKAF) & $10$ \\
max evaluation samples &  $-1$ (all evaluation dataset was used) \\
\bottomrule
\end{tabular}
\end{table}

\begin{table}[H]
\centering
\caption{LLM fine-tuning with \texttt{Qwen3-8B}: training setup and hyperparameter sweep.}
\label{tab:qwen_hparams}
\renewcommand{\arraystretch}{1.2}
\begin{tabular}{l l}
\toprule
\textbf{Parameter} & \textbf{Value} \\
\midrule
Hardware & A100 GPU 40GB \\
Backbone & \texttt{Qwen3-8B} \\
Trainable params & Self-attention matrices (\texttt{q\_proj}, \texttt{k\_proj}, \texttt{v\_proj}, \texttt{o\_proj}) \\
Fine-tuning & LoRA (rank $r{=}8$, scaling $\alpha{=}32$, dropout $p=0.05$) \\
Batch size & $4$ \\
Grad. accumulation & $1$ \\
DType & bfloat16 \\
Max train steps & $3000$ \\
Max seq length & $1024$ \\
Weight decay & $1\!\times\!10^{-6}$ \\
Adam parameters & $\beta_1 = 0.9,, \beta_2 = 0.999, \varepsilon = 10^{-8}$ \\
LR scheduler & linear, warmup $100$ steps \\
Sweep LRs & $\{10^{-6}, 10^{-5},\,5\!\times\!10^{-5},\,10^{-4}, 5\!\times\!10^{-4}, 10^{-3}\}$\\
Update frequency (for SOAP and DyKAF) & $10$ \\
max evaluation samples &  $100$ \\
\bottomrule
\end{tabular}
\end{table}

\paragraph{Notes.} 
Batch size and maximum sequence length were chosen to fit the available hardware. The number of training epochs (GLUE) or maximum training steps (LLM), gradient accumulation, and the learning rate sweep range were selected as a trade-off between runtime efficiency and final model quality. All remaining parameters follow conventional settings commonly adopted in NLP fine-tuning, and were kept unchanged.
 
\subsection{Pre-train Experiment (Section \ref{sub:pt})}
\label{app:pt}

\begin{table}[H]
\centering
\caption{GPT-base pretraining (Shakespeare-char, $\sim$160M tokens): training setup and hyperparameter sweep.}
\label{tab:nanogpt_hparams}
\renewcommand{\arraystretch}{1.2}
\begin{tabular}{l l}
\toprule
\textbf{Parameter} & \textbf{Value} \\
\midrule
Hardware & RTX-2080 GPU 11GB \\
Backbone & \texttt{GPT-base} (character-level) \\
Dataset & Shakespeare-char ($\sim$160M tokens) \\
Vocabulary size & 96 \\
Batch size & $32$ \\
Grad. accumulation & $1$ \\
Sequence length & $256$ \\
Iterations & $30000$ \\
DType & bfloat16 \\
Dropout sweep & $\{0.05, 0.1, 0.15, 0.2\}$ \\
Weight decay & $0.1$ \\
Gradient clipping & $0.5$ \\
Adam parameters & $\beta_1 = 0.9, \beta_2 = 0.999, \varepsilon = 10^{-8}$ \\
LR scheduler & cosine, warmup $1000$ steps \\
Sweep LRs & $\{10^{-5}, 10^{-4}, 10^{-3}, 5\!\times\!10^{-3}, 10^{-2}\}$ \\
Update frequency (SOAP, DyKAF) & $10$ \\
\midrule
Model dimensions & \\
\ \, 0.41M params & n\_embd=$128$, n\_head=$4$, n\_layer=$2$ \\
\ \, 1.55M params & n\_embd=$256$, n\_head$=4$, n\_layer$=2$ \\
\ \, 3.24M params & n\_embd=$256$, n\_head=$4$, n\_layer=$4$ \\
\bottomrule
\end{tabular}
\end{table}
\begin{table}[H]
\centering
\caption{Large-scale LLaMA pretraining (FineWeb, $\sim$100B tokens): training setup.}
\label{tab:llama_hparams}
\renewcommand{\arraystretch}{1.2}
\begin{tabular}{l l}
\toprule
\textbf{Parameter} & \textbf{Value} \\
\midrule
Hardware & A100 GPU 80GB \\
Backbone & \texttt{LLaMA-124M} \\
Dataset & FineWeb ($\sim$100B tokens) \\
Batch size & $32$ \\
Gradient accumulation & $1$ \\
Sequence length & $512$ \\
Scheduler & Cosine, warmup $3000$ steps \\
DType & bfloat16 \\
Dropout & $0.0$ \\
Weight decay & $0.1$ \\
Grad. clipping & $0.5$ \\
Adam parameters & $\beta_1 = 0.9, \beta_2 = 0.999,  \varepsilon = 10^{-8}$ \\
LR & $10^{-3}$ \\
Update frequency (SOAP, DyKAF) & 10 \\
Model dimensions & n\_embd=$768$, n\_head=$12$, n\_layer$=12$ \\
\bottomrule
\end{tabular}
\end{table}
\paragraph{Notes.} 
Most hyperparameter choices in pretraining follow the same principles as in the fine-tuning experiments (Appendix~\ref{app:ft}): hardware-constrained batch size and context length, training steps set as a trade-off between runtime and quality, and otherwise conventional settings adopted in large-scale pretraining. For DyKAF we used the full-rank second moment (\texttt{rank1\_second\_moment = False}), consistent with our observation that pretraining benefits from richer curvature estimates. Importantly, in the large-scale LLaMA-124M FineWeb run (Table~\ref{tab:pretrain_llama}) we did not perform any hyperparameter tuning for DyKAF, instead reusing SOAP settings directly. This highlights the robustness of DyKAF to hyperparameter choices even under resource-intensive training regimes.
\newpage
\section{Higher Order Projector-splitting \label{apx:tensors}}
We can generalize Kronecker Fisher approximation to tensor-shaped parameters. Let the gradients $G_i$ be tensors of shape $n_1\times n_2 \times \dots \times n_d$. The Fisher matrix is defined the same way as in matrix case:
\[F_t = \sum_i vec(G_i)vec(G_i)^\top.\]
We aim to solve the nearest Kronecker approximation problem:
\[\|F_t - L_t^{(1)} \otimes L_t^{(2)} \otimes \dots \otimes L_t^{(d)}\|\to \min_{L_t^{(1)}, \dots, L_t^{(d)}},\]
where $L_t^{(k)} \in \mathbb{R}^{n_k\times n_k}$, using projector-splitting. 

Let $G_t^{(k)} \in \mathbb{R}^{n_k\times (n_1\dots n_{k-1}n_{k+1}\dots n_d)}$ denote the unfolding (rearrangement) of tensor $G_t$ into a matrix by the $k$-th dimension.

The following algorithm is an adaptation of projector-splitting \cite[Section 4]{ceruti2022rank} for the nearest Kronecker approximation of Fisher matrix in case of tensor-shaped parameters. 
\begin{algorithm}
  \caption{
  \texttt{kron\_proj\_split} for tensors. Dynamical Kronecker approximation of Fisher matrix.
  }\label{alg:kron_proj_split_tensor}
\begin{algorithmic}[1]
	\Require 
    $\{L^{(i)}_t\}_{i=1}^{d} \in \mathbb{R}^{n_i \times n_i}$ -- Kronecker product approximation of the Fisher matrix on the optimization step~$t$, $G_t \in \mathbb{R}^{n_1 \times n_2\times \dots \times n_d}$ -- gradient with respect to the layer $W_t$.
    \For{$k=1\dots d$}
    \State $\mathrm{norm}_k = \left\|{L^{(1)}_t}\right\|_{{F}}\dots \left\|{L^{(k-1)}_t}\right\|_{{F}}\left\|{L^{(k+1)}_t}\right\|_{{F}}\dots \left\|{L^{(d)}_t}\right\|_{{F}}$
	\State $\hat{L}^{(k)}_t = \displaystyle L^{(k)}_t \mathrm{norm}_k^2 + G^{(k)}_t \left(L_t^{(1)} \otimes \dots \otimes L_t^{(k-1)} \otimes  L_t^{(k+1)}\otimes  \dots \otimes L_t^{(d)} \right) G^{(k)\top}_t$

	\State $L^{(k)}_{t+1} = \hat{L}^{(k)}_t/  \|{\hat{L}^{(k)}_t}\|_{{F}}$\EndFor
   
	\State $S = \prod_k\displaystyle \langle {L^{(k)}_t}, {L^{(k)}_{t+1}}\rangle_{{F}}+ \mathrm{vec}(G_t)\left(L_{t+1}^{(1)} \otimes  \dots \otimes L_{t+1}^{(d)} \right)  \mathrm{vec}(G_t)^\top$
    \State \Return $\{S^{1/d}L^{(k)}_{t+1}\}_{i=1}^{d}.$

\end{algorithmic}
\end{algorithm}

\end{document}